\def\eqref#1{equation~\ref{#1}}
\def\1{\bm{1}}
\DeclareMathAlphabet{\mathsfit}{\encodingdefault}{\sfdefault}{m}{sl}
\SetMathAlphabet{\mathsfit}{bold}{\encodingdefault}{\sfdefault}{bx}{n}
\newtheorem{theorem}{Theorem}[section]
\newtheorem{lemma}{Lemma}[theorem]
\newtheorem{corollary}{Corollary}[theorem]
\title{SimO Loss: Anchor-Free Contrastive Loss for Fine-Grained Supervised Contrastive Learning}
\author{
Taha Bouhsine \thanks{AI/ML GDE, organizer @mlnomads, secondary emails: tahabhs14@gmail.com}\\
Henery M.Rowan College of Engineering\\
Rowan University\\
Glassboro, NJ 08028, USA \\
\texttt{\{tahabhs14\}@gmail.com}
\And 
Imad El Aaroussi \\
MLNomads\\
Agadir, Souss, Morocco \\
\texttt{\{iela\}@mlnomads.com}
\And
Atik Faysal \\
Henery M.Rowan College of Engineering\\
Rowan University\\
Glassboro, NJ 08028, USA \\
\texttt{\{faysal24\}@rowan.edu}
\And
Wang Huaxia \\
Henery M.Rowan College of Engineering\\
Rowan University\\
Glassboro, NJ 08028, USA \\
\texttt{\{huaxia\}@rowan.edu}
}
\begin{document}

\maketitle

\begin{abstract}
We introduce a novel anchor-free contrastive learning (AFCL) method leveraging our proposed Similarity-Orthogonality (SimO) loss. Our approach minimizes a semi-metric discriminative loss function that simultaneously optimizes two key objectives: reducing the distance and orthogonality between embeddings of similar inputs while maximizing these metrics for dissimilar inputs, facilitating more fine-grained contrastive learning. The AFCL method, powered by SimO loss, creates a fiber bundle topological structure in the embedding space, forming class-specific, internally cohesive yet orthogonal neighborhoods.  We validate the efficacy of our method on the CIFAR-10 dataset, providing visualizations that demonstrate the impact of SimO loss on the embedding space. Our results illustrate the formation of distinct, orthogonal class neighborhoods, showcasing the method's ability to create well-structured embeddings that balance class separation with intra-class variability. This work opens new avenues for understanding and leveraging the geometric properties of learned representations in various machine learning tasks.
\end{abstract}

\begin{figure}[ht]
\vskip -0.2in

\begin{center}
\centerline{\includegraphics[width=0.9\columnwidth]{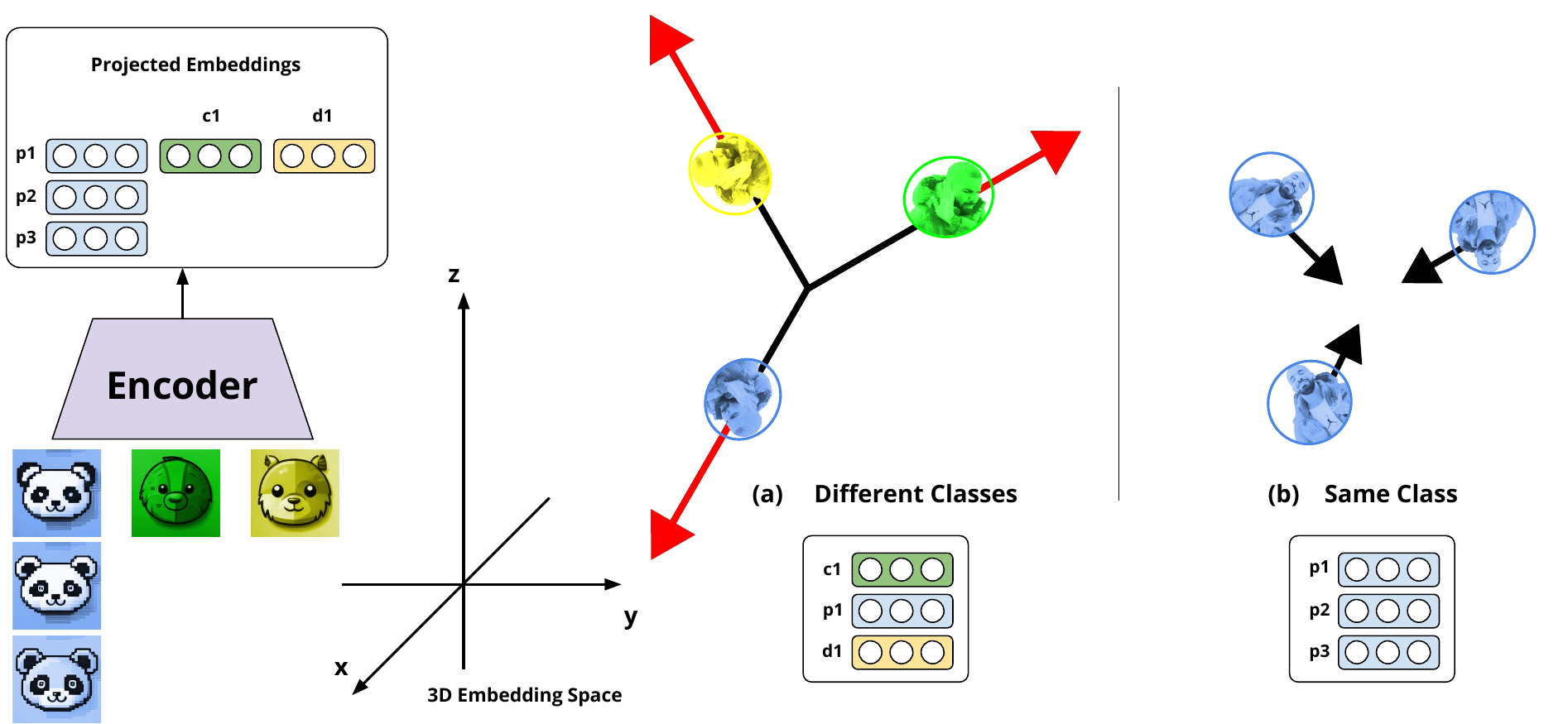}}
\caption{3D interpretation of Anchor-Free Contrastive Learning (AFCL) using Similarity-Orthogonality (SimO) loss: (a) On the left, all the samples are negatively contrasted with each other. The loss function aims to push them away from each other while maintaining the orthogonality between all the embedding vectors in our embedding Space. (b) On the right, all the data points belong to the same class. The loss function here decreases the orthogonality and the distance between embeddings of the same class.}
\label{fig:nbcl}
\end{center}
\vskip -0.2in
\end{figure}

\section{Introduction}

The pursuit of effective representation learning (\cite{gidaris_unsupervised_2018, wu_unsupervised_2018, oord_representation_2019}) has been a cornerstone of modern machine learning, with contrastive methods emerging as particularly powerful tools in recent years. Despite significant advancements, the field of supervised contrastive learning (\cite{khosla_supervised_2021, balestriero_cookbook_2023}) continues to grapple with fundamental challenges that impede the development of truly robust and interpretable models.

Our research unveils persistent challenges in embedding methods (\cite{wen2016discriminative, grill2020bootstrap, hjelm_learning_2019}), notably the lack of interpretability and inefficient utilization of embedding spaces due to dimensionality collapse (\cite{zbontar_barlow_2021, jing_understanding_2022}). Certain techniques, such as max operations in loss functions (e.g., max(0, loss)) (\cite{gutmann2010noise}) and triplet-based methods (\cite{sohn2016improved, tian_understanding_2021}), introduce complications like non-smoothness, which disrupt gradient flow. These approaches also suffer from biases in hand-crafted triplet selection and necessitate extensive hyperparameter tuning, thereby limiting their generalizability (\cite{rusak_infonce_2024}). Contrastive loss functions, including InfoNCE (\cite{chen_simple_2020}), often rely on large batch sizes and negative sampling, leading to increased computational costs and instability. While recent metric learning approaches (\cite{movshovitzattias2017fussdistancemetriclearning, qian2020softtriplelossdeepmetric}) have made strides in improving efficiency and scalability, they frequently sacrifice interpretability. This trade-off results in black-box models that offer minimal insight into the learned embedding structure.

We present SimO loss, a novel AFCL framework that addresses these long-standing issues. SimO introduces a paradigm shift in how we conceptualize and optimize embedding spaces. At its core lies a carefully crafted loss function that simultaneously optimizes Euclidean distances and orthogonality between embeddings – a departure from conventional approaches that typically focus on one or the other (\cite{schroff2015facenet}).

The key innovation of SimO is its ability to project each class into a distinct neighborhood that maintains orthogonality with respect to other class neighborhoods. This property not only enhances the explainability of the resulting embeddings but also naturally mitigates dimensionality collapse by encouraging full utilization of the embedding space. Crucially, SimO operates in a semi-metric space, a choice that allows for more flexible representations while preserving essential distance properties.

From a theoretical standpoint, SimO induces a rich topological structure in the embedding space, seamlessly blending aspects of metric spaces, manifolds, and stratified spaces. This unique structure facilitates efficient class separation while preserving nuanced intra-class relationships – a balance that has proven elusive in previous work. The semi-metric nature of our approach, allowing for controlled violations of the triangle inequality, enables more faithful representations of complex data distributions that often defy strict metric assumptions.

Our key contributions are:

\begin{itemize}
    \item We propose an anchor-free pertaining method (AFCL) for supervised and semi-supervised contrastive learning
    
    \item We introduce SimO, an anchor-free contrastive learning loss that significantly advances the state-of-the-art in terms of embedding explainability and robustness.

    \item  We provide a comprehensive theoretical analysis of the induced semi-metric embedding space, offering new insights into the topological properties of learned representations.

\end{itemize}

As we present SimO to the community, we do so with the conviction that it represents not just an incremental advance, but a fundamental reimagining of contrastive learning—one that addresses the core challenges that have long hindered progress in the field.
\section{Related Work}


\subsection{Anchor-Based Contrastive Learning}

In contrastive learning, anchor-based losses have evolved from simple pairwise comparisons to more sophisticated multi-sample approaches (\cite{khosla_supervised_2021}). The triplet loss (\cite{coria_comparison_2020}), which compares an anchor with one positive and one negative sample, has found success in applications like face recognition (\cite{chopra_learning_2005}), despite its tendency towards slow convergence. Building on this foundation, the (N+1)-tuplet loss extends the concept to multiple negatives, approximating the ideal case of comparing against all classes. Further refinement led to the development of the multi-class N-pair loss, which significantly improves computational efficiency through strategic batch construction, requiring only 2N examples for N distinct (N+1)-tuplets (\cite{NIPS2016_6b180037}). Recent theoretical work has illuminated the connections between these various loss functions. Notably, the triplet loss can be understood as a special case of the more general contrastive loss. Moreover, the supervised contrastive loss (\cite{khosla_supervised_2021}), when utilizing multiple negatives, bears a close resemblance to the N-pairs loss. Nevertheless, anchor-based methods are sensitive to negative sample quality, which can lead to inefficiencies in small datasets and struggle with false negatives. It also relies heavily on effective data augmentations and large batch size, with a risk of overlooking global relationships.

\subsection{Dimensionality Collapse in Contrastive Learning Methods}

Dimensionality collapse, a significant challenge in contrastive learning, occurs when learned representations converge to a lower-dimensional subspace, thereby diminishing their discriminative power and compromising the model's ability to capture data structure effectively (\cite{jing2020self}). To address this issue, researchers have proposed several innovative strategies. The NT-Xent loss function (\cite{chen_simple_2020}) implements temperature scaling to emphasize hard negatives, promoting more discriminative representations . Another approach involves the use of a nonlinear projection head, which enhances representation quality through improved hypersphere mapping (\cite{grill2020bootstrap}). The Barlow Twins method (\cite{zbontar_barlow_2021}) takes a different tack, focusing on redundancy reduction by minimizing correlations between embedding vector components through optimization of the cross-correlation matrix. Architectural innovations have also played a crucial role in combating dimensionality collapse. Methods like BYOL and SimSiam employ asymmetric architectures to prevent the model from converging to trivial solutions (\cite{chen2021exploring}). The use of stop gradient in these methods ensures that the models do not converge to produce the same outputs over time. Additionally, the use of batch normalization (\cite{ioffe2015batch}) has been empirically shown to stabilize training and prevent such trivial convergence, although the precise mechanisms underlying its effectiveness remain an area of active research (\cite{peng2023re}).

\subsection{Explainability of Contrastive Learning}

The underlying mechanisms driving the effectiveness of contrastive learning remain an active area of investigation. To shed light on the learned representations, \cite{zhu2021improving} introduced attribution techniques for visualizing salient features. \cite{cosentino2022toward} explored the geometric properties of self-supervised contrastive methods. They discovered a non-trivial relationship between the encoder and the projector, and the strength of data augmentation with increasing complexity. They provided a theoretical framework for understanding how these methods learn invariant representations based on the geometry of the data manifold. Furthermore, \cite{steck2024cosine} examined the implications of cosine similarity in embeddings, challenging the notion that it purely reflects similarity and suggesting that its geometric properties may influence representation learning outcomes. \cite{wang2021understanding} investigate the behavior of unsupervised contrastive loss, highlighting its hardness-aware nature and how temperature influences the treatment of hard negatives. They show that while uniformity in feature space aids separability, excessive uniformity can harm semantic structure by pushing semantically similar instances apart. \cite{wang2020understanding} identified alignment and uniformity as key properties of contrastive learning. Alignment encourages closeness between positive pairs, while uniformity ensures the even spread of representations on the hypersphere. Their work demonstrates that optimizing these properties leads to improved performance in downstream tasks and provides a theoretical framework for understanding contrastive learning's effectiveness in representation learning. Together, these works lay the groundwork for a deeper theoretical understanding of contrastive learning, highlighting the necessity for additional investigation.


%



\section{Preliminaries}

\subsection{Metric Space}
A metric space is a set $X$ together with a distance function $d : X \times X \to \mathbb{R}$ (called a metric) that satisfies the following properties for all $x, y, z \in X$:

\begin{enumerate}
    \item Non-negativity: $d(x, y) \geq 0$
    \item Identity of indiscernibles: $d(x, y) = 0$ if and only if $x = y$
    \item Symmetry: $d(x, y) = d(y, x)$
    \item Triangle inequality: $d(x, z) \leq d(x, y) + d(y, z)$
\end{enumerate}

\subsection{Semi-Metric Space}
A semi-metric space is a generalization of a metric space where the triangle inequality is not required to hold. It is defined as a set $X$ with a distance function $d : X \times X \to \mathbb{R}$ that satisfies:

\begin{enumerate}
    \item Non-negativity: $d(x, y) \geq 0$
    \item Identity of indiscernibles: $d(x, y) = 0$ if and only if $x = y$
    \item Symmetry: $d(x, y) = d(y, x)$
\end{enumerate}


\section{Method}

\subsection{Similarity-Orthogonality (SimO) Loss  Function} 

We propose a novel loss function that leverages Euclidean distance and orthogonality (through the squared dot product) for learning the embedding space. This function, which we term the Similarity-Orthogonality (SimO) loss, is defined as:

\begin{equation}
\mathcal{L_\text{SimO}} = y \left[ \frac{ \sum_{i,j} d_{ij}}{\epsilon + \sum_{i,j} o_{ij}}\right] + (1 - y)  \left[ \frac{\sum_{i,j} o_{ij}}{\epsilon + \sum_{i,j} d_{ij}}\right]
\end{equation}

- $\forall i,j,\ i\ \neq \ j \ and\ i \le j $ are indices of the embedding pairs within a batch

- $y$ is a binary label for the entire batch, where $y = 1$ for similarity and $y = 0$ for dissimilarity

- $d_{ij} = ||e_i - e_j||^2_2 \cdot e_j$ is the squared Euclidean distance between embeddings $e_i$ and $e_j$

- $o_{ij} = (e_i \cdot e_j)^2$ is the squared dot product of embeddings $e_i$ and $e_j$

- $\epsilon$ is a small constant to prevent division by zero

SimO loss function presents a novel framework for learning embedding spaces, addressing several critical challenges in representation learning. Below, we highlight its key properties and advantages:

\begin{itemize}
    \item Semi-Metric Space function: The SimO loss function operates within a semi-metric space, as formalized in the SimO Semi-Metric Space Theorem. This allows for a flexible representation of distances between embeddings, particularly useful for high-dimensional data where traditional metrics may fail to capture complex relationships (Theorem~\ref{theorem:semi_metric}).
    
    \item Preventing Dimensionality Collapse: The orthogonality component of the SimO loss plays a pivotal role in preventing dimensionality collapse, a phenomenon where dissimilar classes become indistinguishable in the embedding space. By encouraging orthogonal embeddings for distinct classes, SimO ensures that the learned representations remain well-separated and span diverse regions of the embedding space, preserving class distinctiveness (Theorem~\ref{theorem:dim_collapse}).
    
    \item Mitigating the Curse of Orthogonality: Our embedding techniques are constrained by the Curse of Orthogonality, which limits the number of mutually orthogonal vectors to the dimensionality of the embedding space. SimO overcomes this limitation by leveraging orthogonality-based regularization (orthogonality leaning factor) informed by the Johnson-Lindenstrauss lemma (Theorem~\ref{theorem:johnson_lemma}), thus enabling more effective utilization of the available space without falling prey to orthogonality saturation (Theorem~\ref{theorem:orthogonality_curse}).
\end{itemize}

\begin{algorithm}[tb]
\caption{SIMO Loss Function}
\label{alg:simo_loss}
\begin{algorithmic}[1]
\STATE {\bfseries Input:} embeddings, label\_batch, indices, epsilon

\STATE \textbf{function} orthogonality\_loss(embeddings, indices)
    
    \# indices contains the unique combinations between different embeddings
    \STATE $E1 \leftarrow \text{embeddings}[\text{indices}[0]]$
    \STATE $E2 \leftarrow \text{embeddings}[\text{indices}[1]]$
    \STATE $\text{dot\_product\_squared} \leftarrow \text{vmap}(\text{pairwise\_dot\_product\_squared})$
    \STATE $\text{loss} \leftarrow \sum \text{dot\_product\_squared}(E1, E2)$
    \RETURN loss

\STATE \textbf{function} similarity\_loss(embeddings, indices)

    \# indices contains the unique combinations between different embeddings

    \STATE $E1 \leftarrow \text{embeddings}[\text{indices}[0]]$
    \STATE $E2 \leftarrow \text{embeddings}[\text{indices}[1]]$
    \STATE $\text{squared\_distance} \leftarrow \text{vmap}(\text{pairwise\_squared\_distance})$
    \STATE $\text{loss} \leftarrow \sum \text{squared\_distance}(e1, e2)$
    \RETURN loss

\STATE \textbf{function} SimO\_Loss(embeddings, label\_batch, indices, epsilon)
    \STATE $\text{ortho\_loss} \leftarrow \text{orthogonality\_loss}(\text{embeddings}, \text{indices})$
    \STATE $\text{sim\_loss} \leftarrow \text{similarity\_loss}(\text{embeddings}, \text{indices})$
    \STATE $\text{total\_loss} \leftarrow \text{label\_batch} \cdot \frac{\text{sim\_loss}}{\epsilon + \text{ortho\_loss}} + (1 - \text{label\_batch}) \cdot \frac{\text{ortho\_loss}}{\epsilon + \text{sim\_loss}}$
    \RETURN $\text{total\_loss} / \text{indices}[0].\text{shape}[0]$

\end{algorithmic}
\end{algorithm}

\subsection{Anchor-Free Contrastive Learning}

\begin{algorithm}[tb]
\caption{Anchor-Free Contrastive Learning with SimO loss pseudo-implementation}
\label{alg:simo_updated}
\begin{algorithmic}[1]
\STATE {\bfseries Input:} data 
($x_{train}$, $y_{train}$), 
num\_epochs, batch\_size, num\_classes, k, olean
\STATE Initialize model $f$ parameters $\theta$
\STATE Initialize optimizer state
\FOR{$iteration=1$ {\bfseries to} num\_iterations}
\STATE batch, labels $\leftarrow$ \text{create\_mean\_batch}(data, batch\_size, k, num\_classes)
\STATE $embeddings \leftarrow f_\theta(batch; \theta)$

\STATE $embeddings \leftarrow embeddings.reshape(num\_classes, k, embeddings\_dim)$ \# Reshaping the embeddings to group similar images together
\STATE $\mathcal{L\_\text{similar}} \leftarrow \text{simo\_loss}(embeddings, 1.0)$ \# No need for orthogonality leaning

\STATE $mean\_embeddings \leftarrow \{\}$
\FOR{each unique label in $labels$}
\STATE $mean\_e \leftarrow \text{mean}(embeddings[\text{labels} == label])$
\STATE $mean\_embedding \leftarrow mean\_embeddings \cup \{mean\_e\}$
\ENDFOR

\STATE $\mathcal{L\_\text{mean\_dissimilar}} \leftarrow \text{simo\_loss}(mean\_embeddings, 0 + olean)$

\STATE $embeddings \leftarrow embeddings.reshape(k, num\_classes, embeddings\_dim)$ \# Reshaping the Embeddings to group dissimilar images together
\STATE $\mathcal{L\_\text{dissimilar}} \leftarrow \text{simo\_loss}(embeddings, 0.0 + olean)$ \# $olean$ is Orthogonality leaning 

\STATE $\mathcal{L} \leftarrow \mathcal{L\_\text{similar}} + \mathcal{L\_\text{mean\_dissimilar}} + \mathcal{L\_\text{dissimilar}}$

\STATE $g \leftarrow \nabla_\theta \mathcal{L}$
\STATE Update $\theta$ and optimizer state using $g$
\ENDFOR
\STATE \textbf{return} Trained model parameters $\theta$
\end{algorithmic}
\end{algorithm}

We introduce a novel contrastive learning pretraining strategy (Algorithm~\ref{alg:simo_updated}) that uses the SimO loss function. For each iteration, we create a batch of $k$ images sampled from n randomly selected classes where $num\_classes = batch\_size // k$. We generate the embeddings using our model.

The loss computation strategy is the sum of three different operations:

- To calculate the loss over embeddings from the same class, we reshape the embeddings to $(num\_classes, k, embeddings\_dim)$:

$\mathcal{L}_\text{same} = \text{SimO\_loss} (embeddings, 1.0)$ SimO is applied class-wise (axis 0) to calculate the loss over similar embeddings then sum it up.

- Using the same $embeddings$ predicted, we continue to do the following:

\begin{itemize}
    \item Compute the mean embedding for each of the $n_{classes}$ represented in the batch: $\mu_i = \frac{1}{k} \sum_{j=1}^k f_\theta(I_j)$, where $I_j \in mb_i$
    \item Calculate the loss using these mean embeddings:
$\mathcal{L}_{mean\_dissimilar} = \text{simo}({[\mu_1, \mu_2, ..., \mu_m], 0 + olean})$, with $olean$ is the orthogonality leaning factor
\end{itemize}

- For $\mathcal{L}_{dissimilar}$ We reshape the embeddings to $( k, num\_classes, embeddings\_dim)$
and then we calculate $\mathcal{L}_{dissimilar} = \text{SimO\_loss} (embeddings, 0.0 + olean)$ where $olean$ is the orthogonality leaning factor.

This dual-batch approach allows the model to learn both intra-class compactness (through $\mathcal{L}_\text{same}$) and inter-class separability (through $\mathcal{L}_{mean\_dissimilar}$ and $\mathcal{L}_{dissimilar}$). By operating on class means, we encourage the model to learn more robust discriminative features that generalize well across class instances with reduce the impact of negative sampling.
The overall training objective alternates between these two batch types, optimizing:
$\mathcal{L} = \mathbb{E}[\mathcal{L}_{same}] + \mathbb{E}[\mathcal{L}_{mean\_different}] + \mathbb{E}[\mathcal{L}_{dissimilar}]$
where the expectation is taken over the random sampling of batches during training. 

\subsection{Experimental Setup}

Our experiments were conducted using GPU-enabled cloud computing platforms, with experiment tracking and visualization handled by Weights \& Biases \cite{wandb}. We implemented our models using JAX/Flax and TensorFlow frameworks. For our experiments, we utilized the CIFAR-10 dataset \cite{krizhevsky2009learning}. CIFAR-10 consists of 60,000 32x32 RGB images across 10 classes, with 50,000 for training and 10,000 for testing.

In the pretraining phase for CIFAR-10, we used an embedding dimension of 16 for the linear projection head following the ResNet encoder with layer normalization instead of batch normalization, with a batch size of 96 and 32 randomly selected images per class from 3 classes. During the linear probing phase, we fed the projection of our frozen pretrained model to a classifier head consisting of one MLP layer with 128 neurons, followed by an output layer matching the number of classes in CIFAR-10 dataset.

\section{Results}

\begin{figure*}[ht]
  \centering
  \begin{subfigure}[t]{0.45\textwidth}
    \centering
    \includegraphics[width=\linewidth]{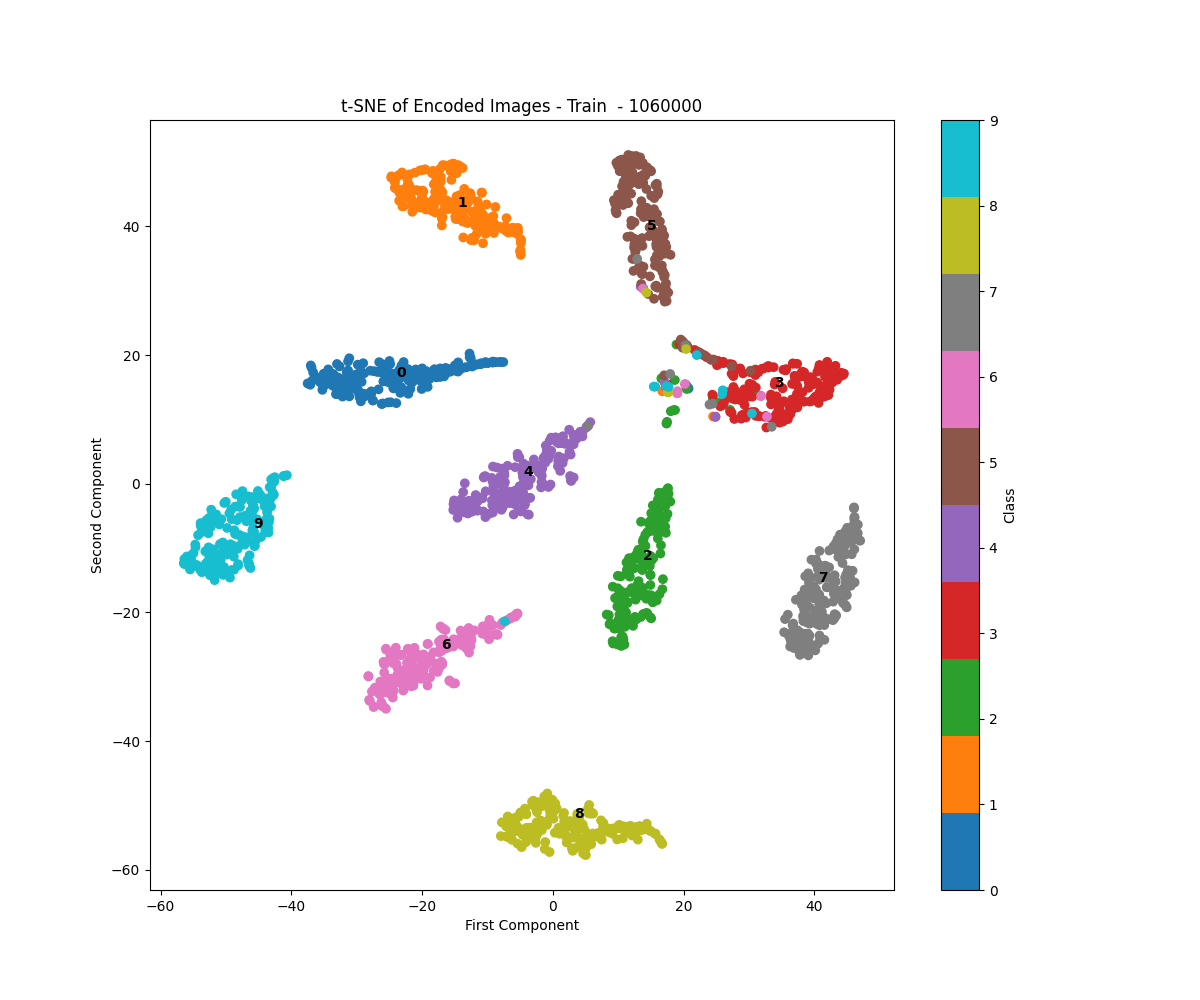}
    \caption{}
    \label{fig:dis}
  \end{subfigure}%
  \hfill
  \begin{subfigure}[t]{0.45\textwidth}
    \centering
    \includegraphics[width=\linewidth]{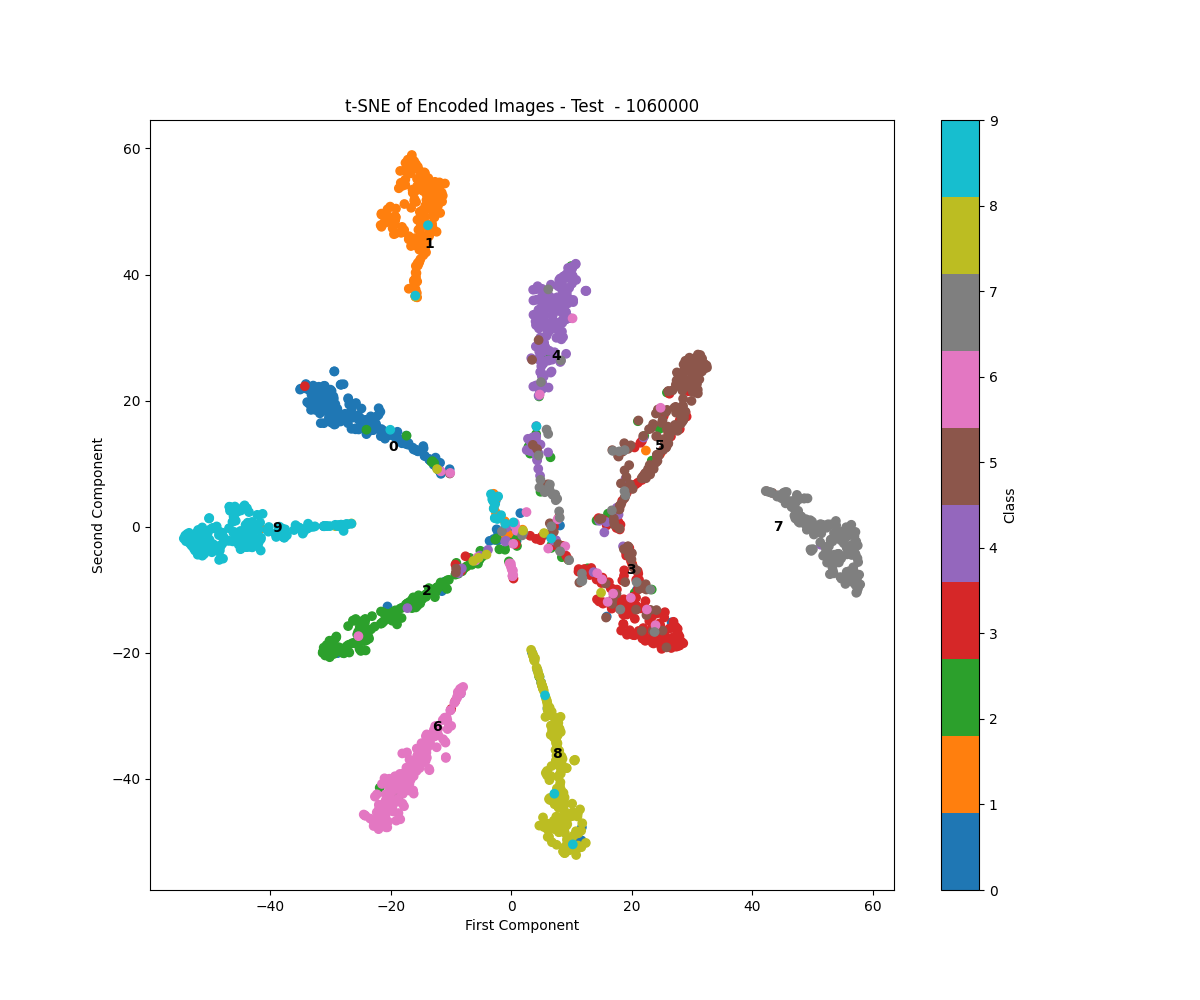}
    \caption{}
    \label{fig:sim}
  \end{subfigure}
  \caption{Manifold visualization of the Embedding Space using T-SNE for both (a) trainset  and (b) testset}
  \label{fig:cifar10_embeddings_vis}
\end{figure*}

\begin{figure*}[ht]
  \centering
\includegraphics[width=0.7\linewidth]{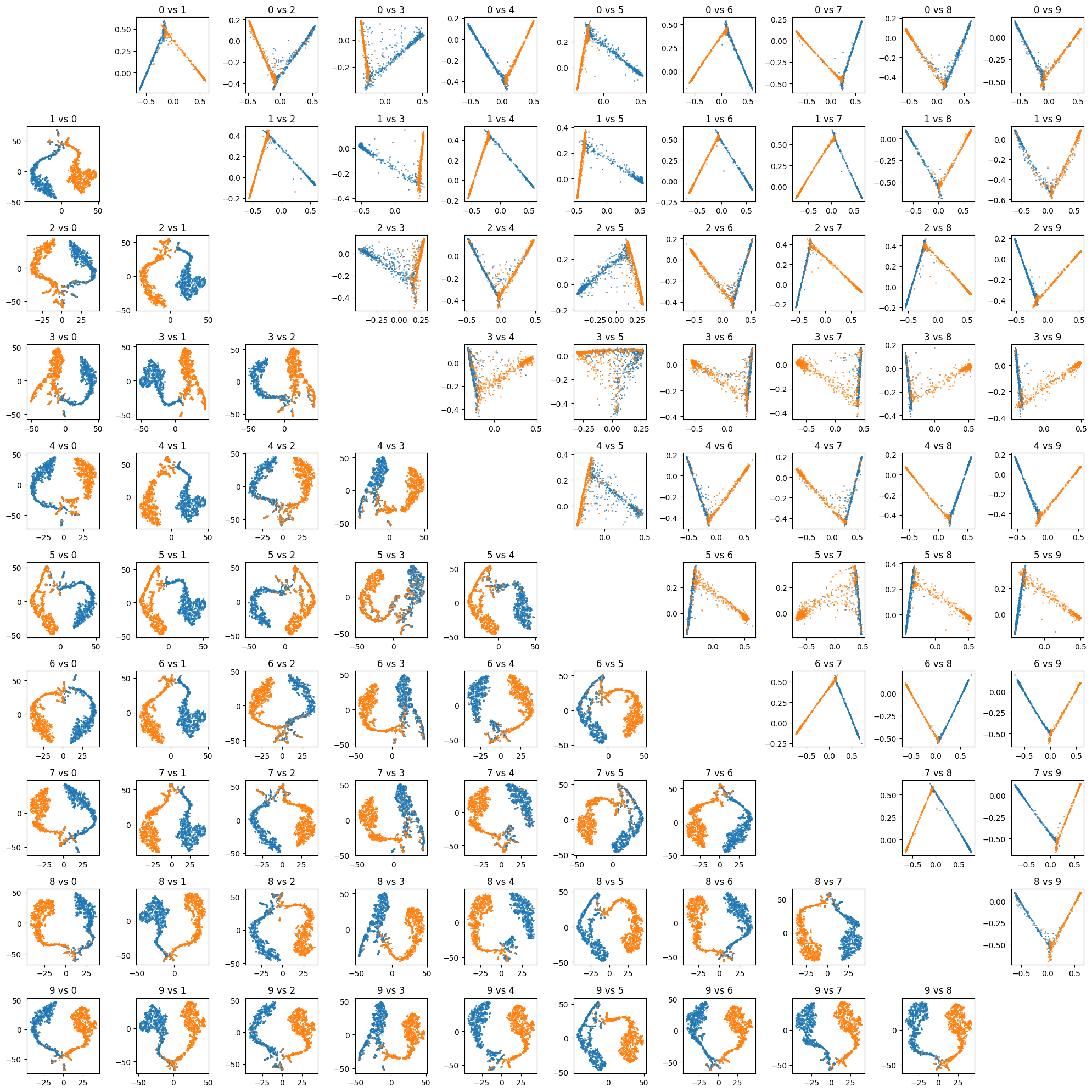}
  \caption{Pairwise Manifold Visualization using TSNE (Lower-Triangular Plots} and PCA (Uper Triangular Plots)
  \label{fig:pairwise_manifold}
\end{figure*}

\begin{figure*}[ht]
  \centering
  \begin{subfigure}[t]{0.45\textwidth}
    \centering
    \includegraphics[width=\linewidth]{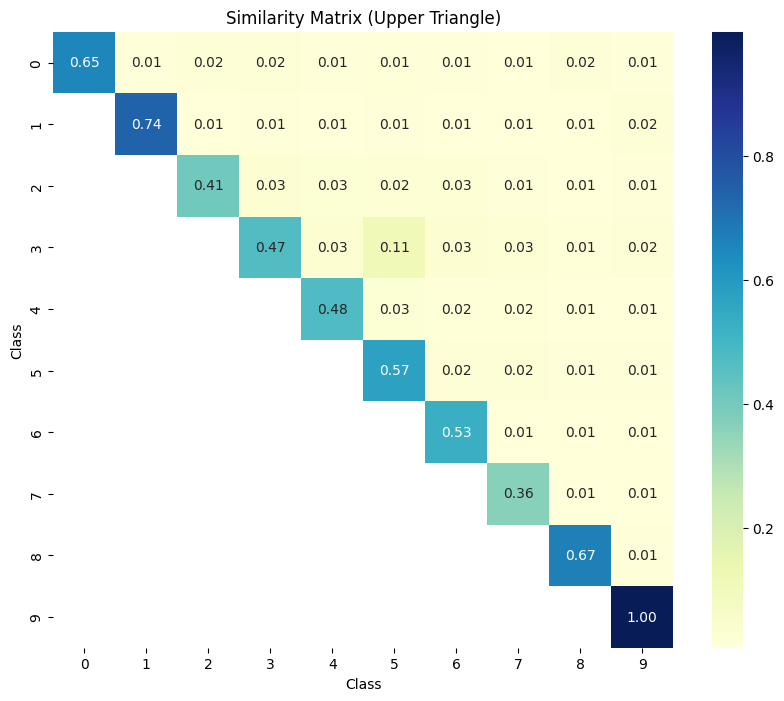}
    \caption{Normalized Similarity Matrix}
    \label{fig:pairsim}
  \end{subfigure}%
  \hfill
  \begin{subfigure}[t]{0.45\textwidth}
    \centering
    \includegraphics[width=\linewidth]{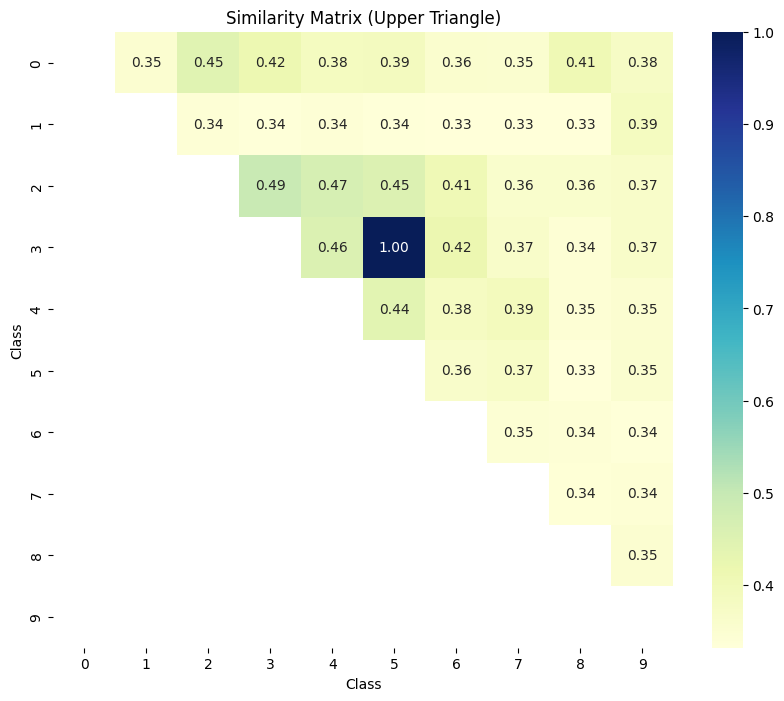}
    \caption{Mean embedding of each class}
    \label{fig:meansim}
  \end{subfigure}
  \caption{Normalized Similarity Matrix calculated using SimO (a) Pairwise embeddings (b) Class Means}
  \label{fig:pairwise_matrices}
\end{figure*}

Our extensive experiments on the CIFAR-10 dataset demonstrate the effectiveness of SimO in learning discriminative and interpretable embeddings. We present a multifaceted analysis of our results, encompassing unsupervised clustering, supervised fine-tuning, and qualitative visualization of the embedding space.

To assess the transferability and discriminative capacity of our learned representations, we conducted a supervised fine-tuning experiment. We froze the SimO-trained encoder and attached a simple classifier head, fine-tuning only this newly added layer for a single epoch. This minimal fine-tuning yielded impressive results:

\begin{table}[ht]
    \centering
    \begin{tabular}{c|c|c|c|c}
        \hline
        \textbf{Dataset} & \textbf{Model} & \textbf{Projection Head dim.} & \textbf{Train Accuracy (\%)} & \textbf{Test Accuracy (\%)}\\
        \hline
         Cifar10 & ResNet18 & 16 & 94 & 85 \\
        \hline
    \end{tabular}
    \caption{Model Performance Metrics over 1 epoch fine-tuning of a classifier head and the frozen pretrained model}
    \label{tab:model_performance}
\end{table}

The rapid convergence to high accuracy with minimal fine-tuning underscores the quality and transferability of our SimO-learned representations. It's worth noting that this performance was achieved with only 1 epoch of fine-tuning, demonstrating the efficiency of our approach.

Examination of the confusion matrix revealed that the model primarily struggles with distinguishing between the 'cat' and 'dog' classes. This observation aligns with our qualitative analysis of the embedding space visualizations (Figure~\ref{fig:cifar10_embeddings_vis}, Figure~\ref{fig:pairwise_manifold}, Figure~\ref{fig:pairwise_matrices}). The challenge in separating these classes is not unexpected, given the visual similarities between cats and dogs, and has been observed in previous works \cite{khosla2012undoing, zhang2021understanding}.

We conducted a longitudinal analysis of the embedding space evolution using t-SNE projections at various training iterations (Figure~\ref{fig:continual_simo}). This analysis revealed intriguing dynamics in the learning process:

\textbf{Continual Learning Behavior:} We observed a tendency towards continual learning (Figure~\ref{fig:continual_simo}), where the model appeared to focus on one class at a time. This behavior suggests that SimO naturally induces a curriculum-like learning process, potentially contributing to its effectiveness.

\textbf{Persistent Challenges:} The 'cat' and 'dog' classes remained challenging for the model from 100,000 iterations up to 1 million iterations. This persistent difficulty aligns with our quantitative error analysis and highlights an area for potential future improvements.

\textbf{Progressive Separation:} For training, we observed a clear trend of increasing inter-class separation and intra-class cohesion, with the exception of the aforementioned challenging classes.

These results collectively demonstrate the efficacy of SimO in learning rich, discriminative, and interpretable embeddings. The observed continual learning behavior and the challenges with visually similar classes provide insights into the learning dynamics of our approach and point to exciting directions for future research.

\begin{figure*}[ht]
  \centering
  \begin{subfigure}[t]{0.15\textwidth}
    \centering
    \includegraphics[width=\linewidth,trim={5cm 3.54cm 10cm 4cm},clip]{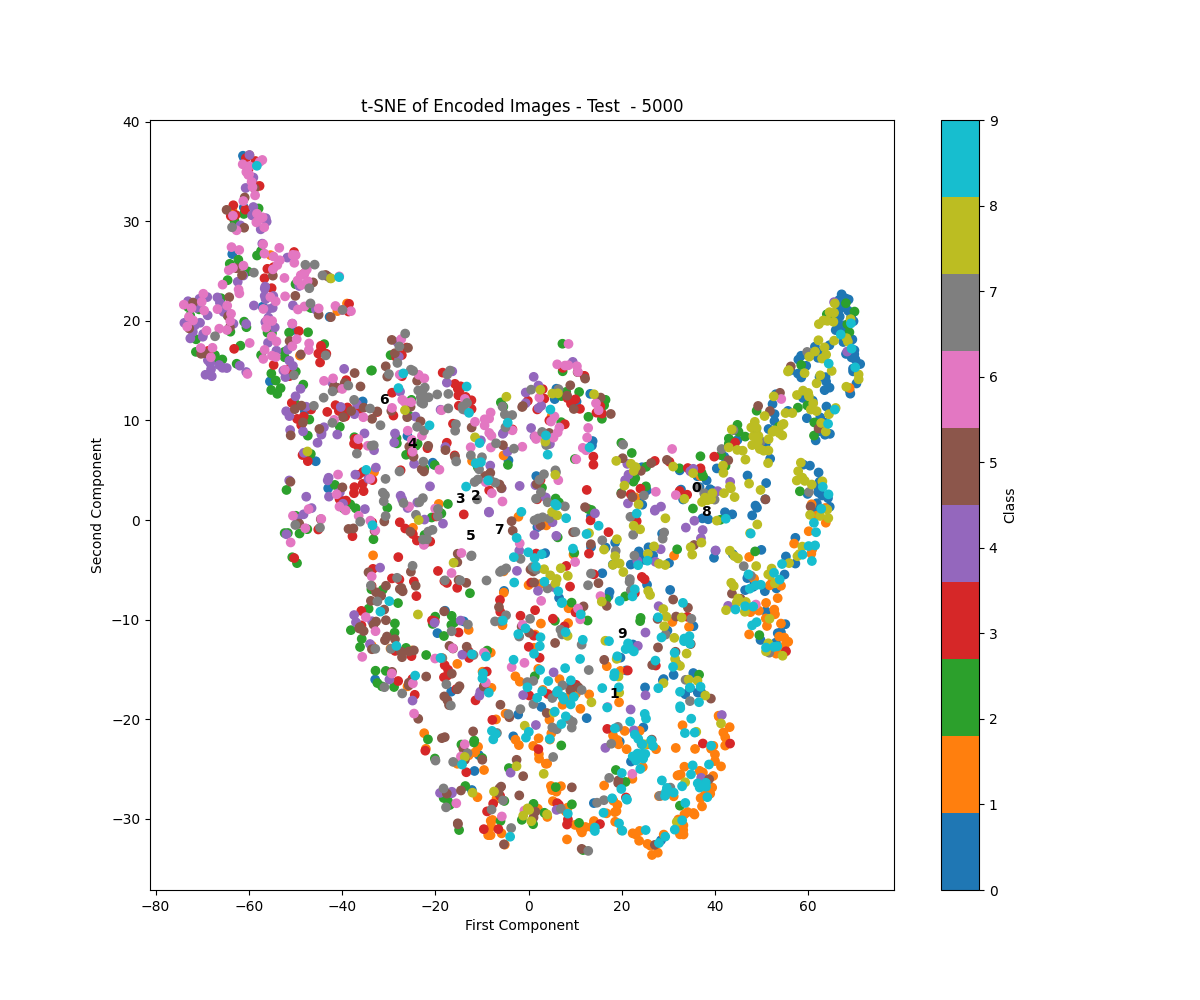}
    \caption*{5,000th}
  \end{subfigure}%
  \begin{subfigure}[t]{0.15\textwidth}
    \centering
    \includegraphics[width=\linewidth,trim={5cm 3.54cm 10cm 4cm},clip]{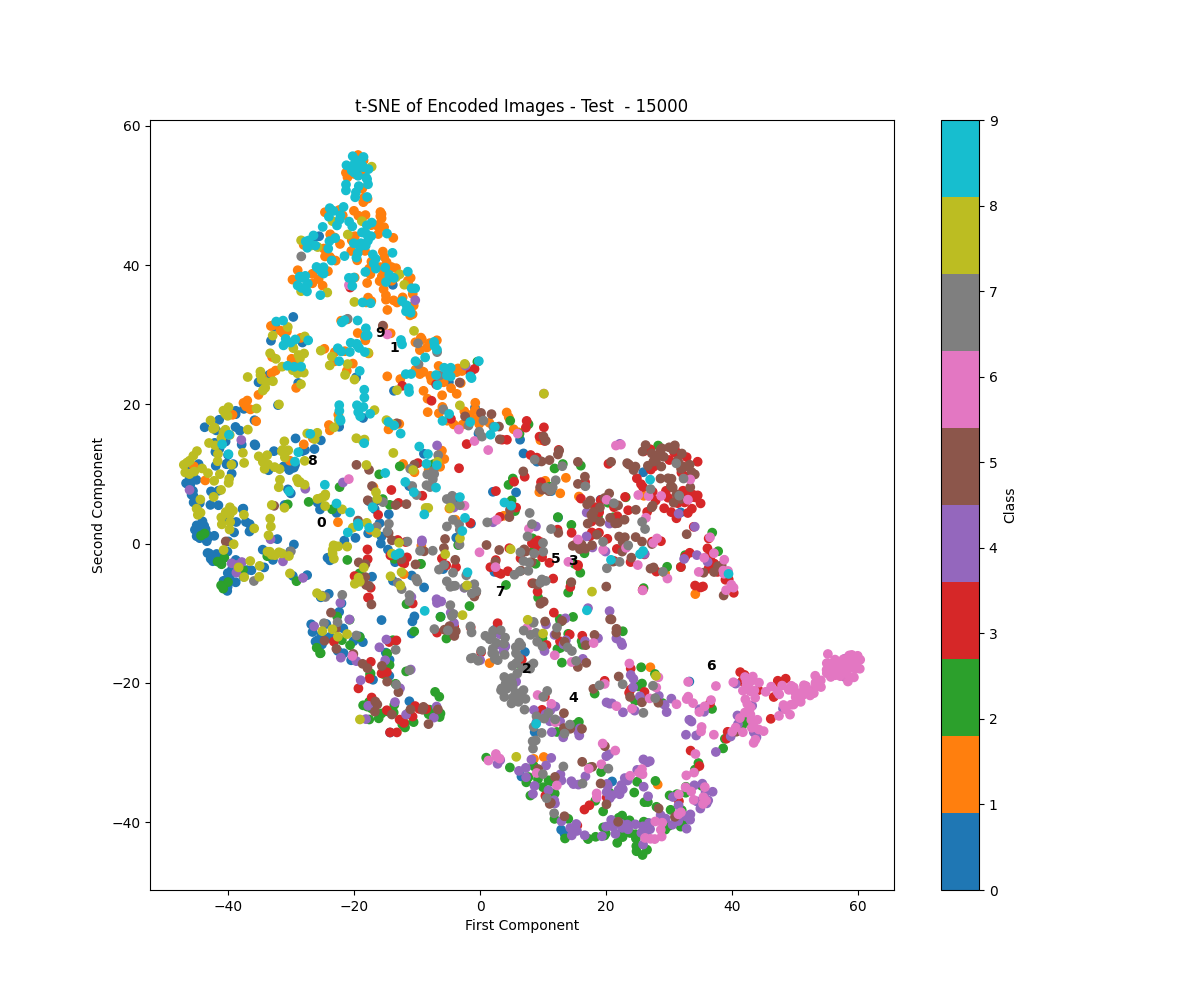}
    \caption*{15,000th}
  \end{subfigure}%
  \begin{subfigure}[t]{0.15\textwidth}
    \centering
    \includegraphics[width=\linewidth,trim={5cm 3.54cm 10cm 4cm},clip]{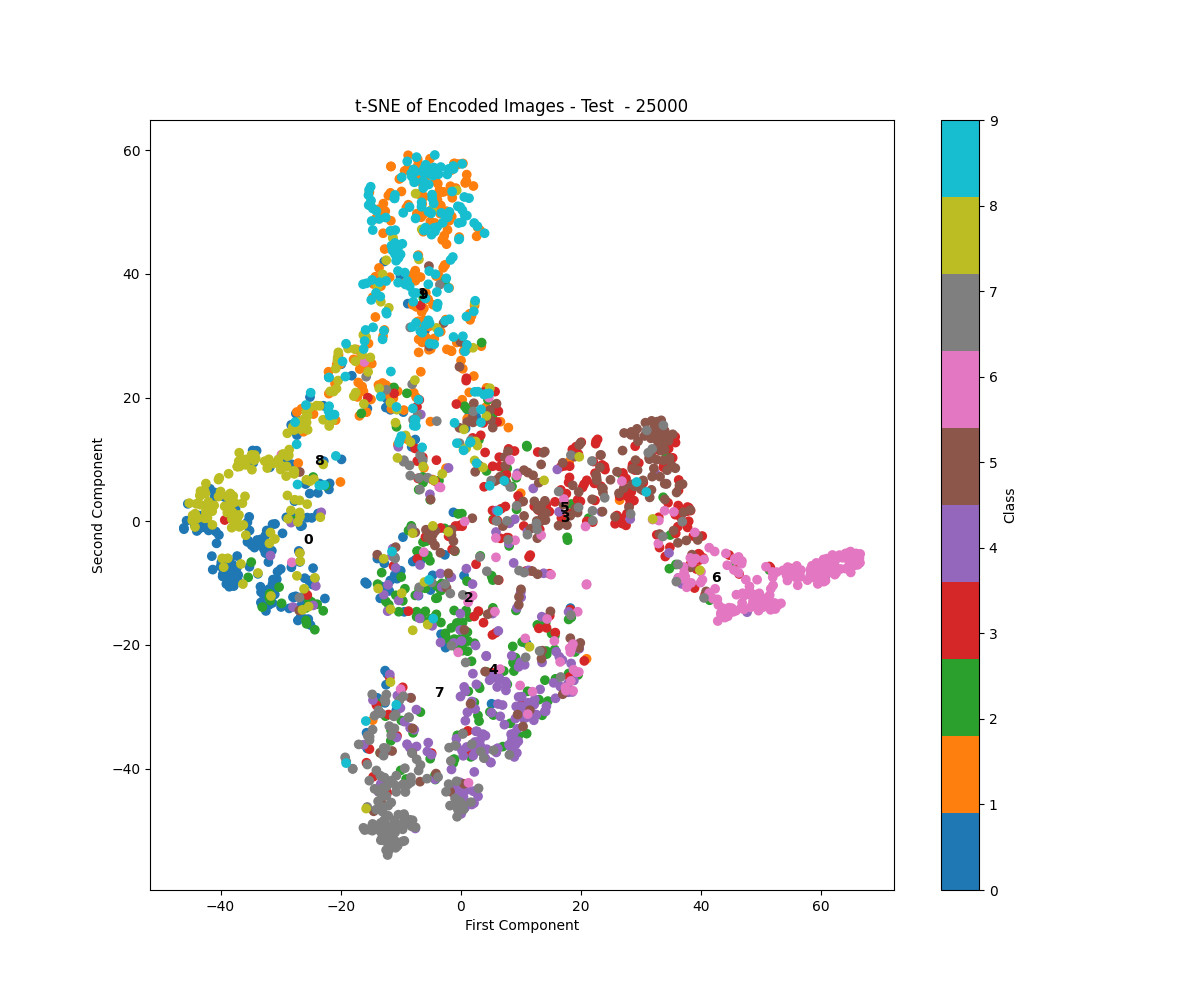}
    \caption*{25,000th}
  \end{subfigure}%
  \begin{subfigure}[t]{0.15\textwidth}
    \centering
    \includegraphics[width=\linewidth,trim={5cm 3.54cm 10cm 4cm},clip]{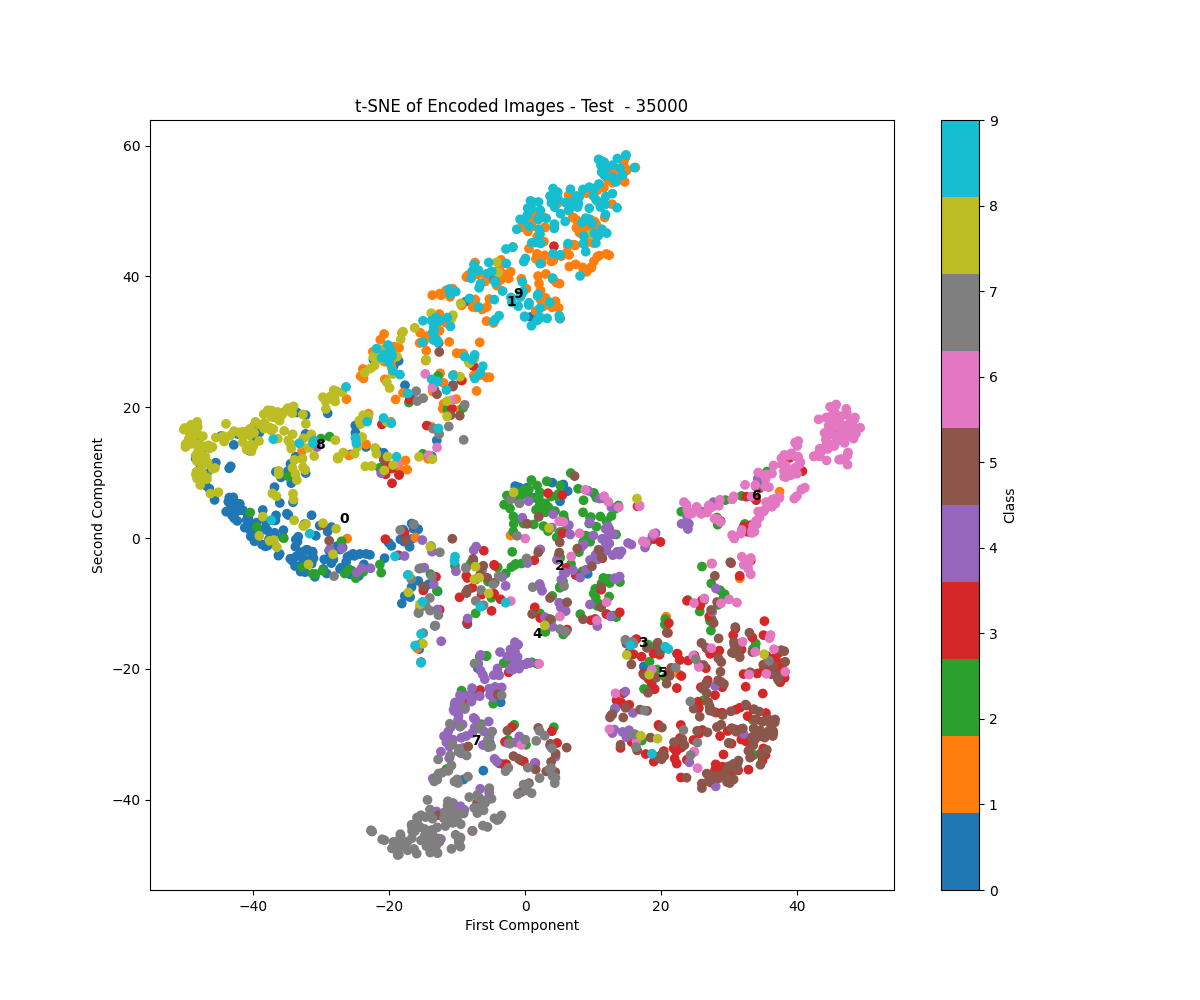}
    \caption*{35,000th}
  \end{subfigure}%
  \begin{subfigure}[t]{0.15\textwidth}
    \centering
    \includegraphics[width=\linewidth,trim={5cm 3.54cm 10cm 4cm},clip]{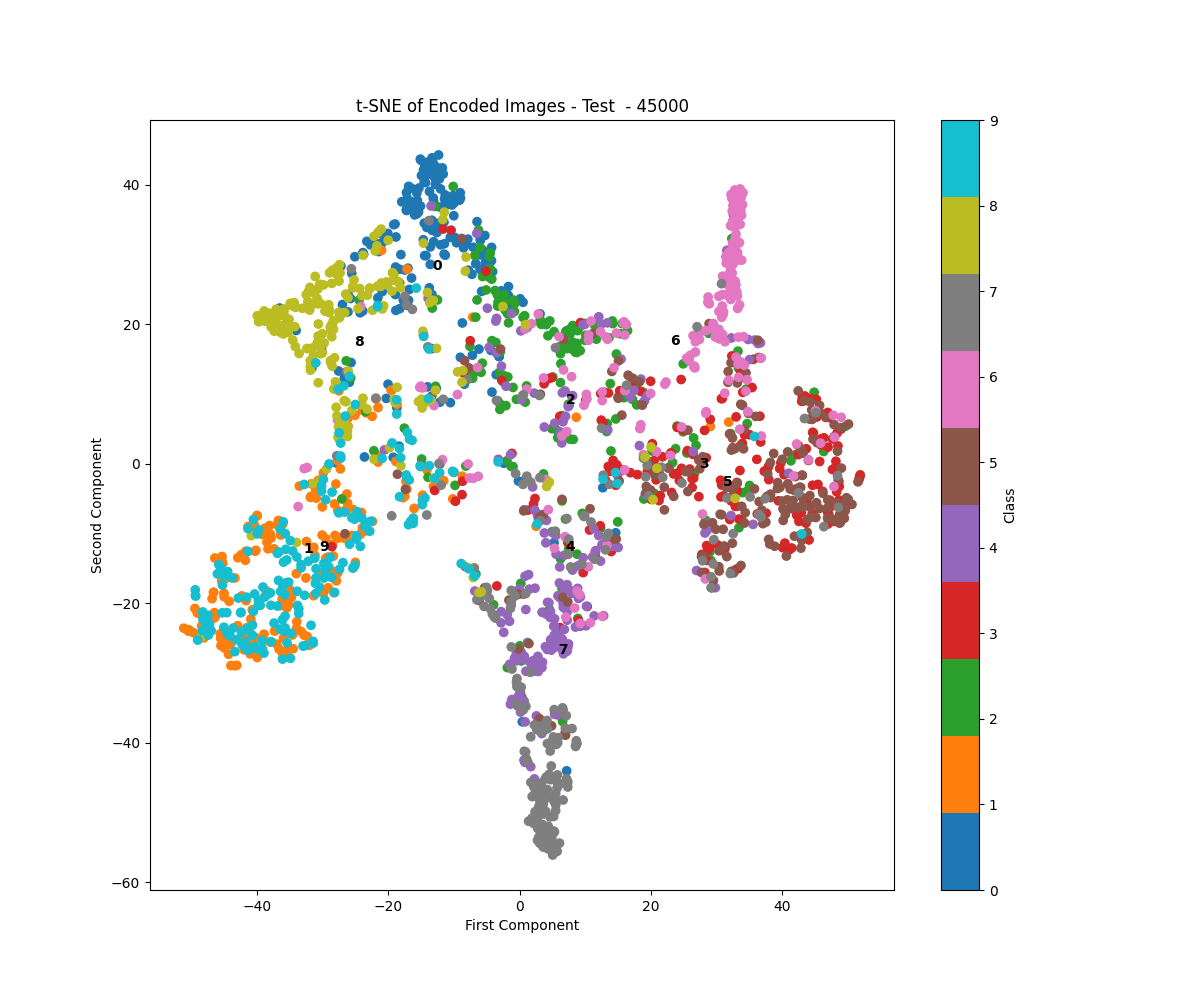}
    \caption*{45,000th}
  \end{subfigure}%
    \begin{subfigure}[t]{0.15\textwidth}
    \centering
    \includegraphics[width=\linewidth,trim={5cm 3.54cm 10cm 4cm},clip]{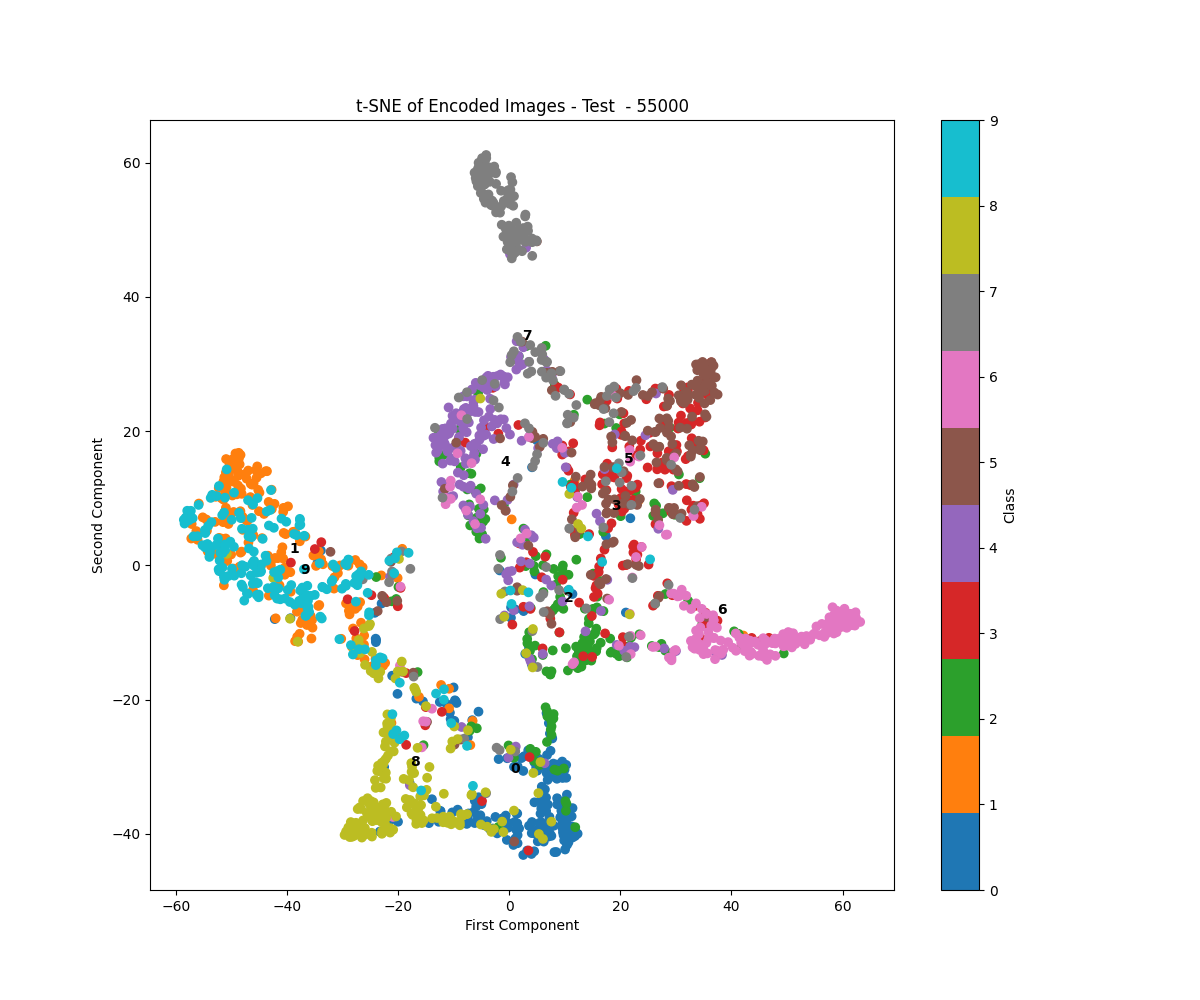}
    \caption*{55,000th}
  \end{subfigure}%
  
  \begin{subfigure}[t]{0.15\textwidth}
    \centering
    \includegraphics[width=\linewidth,trim={5cm 3.54cm 10cm 4cm},clip]{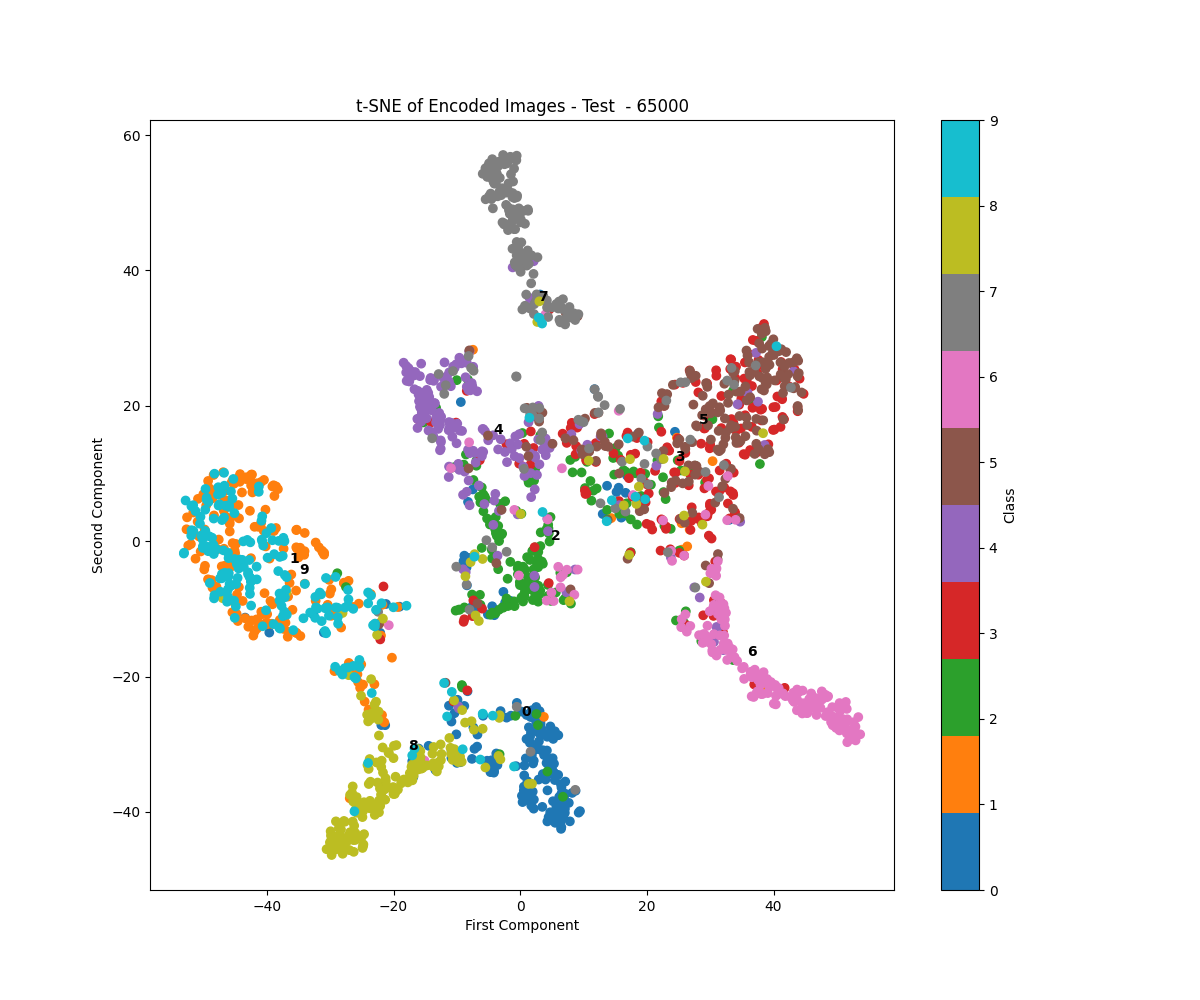}
    \caption*{65,000th}
  \end{subfigure}%
  \begin{subfigure}[t]{0.15\textwidth}
    \centering
    \includegraphics[width=\linewidth,trim={5cm 3.54cm 10cm 4cm},clip]{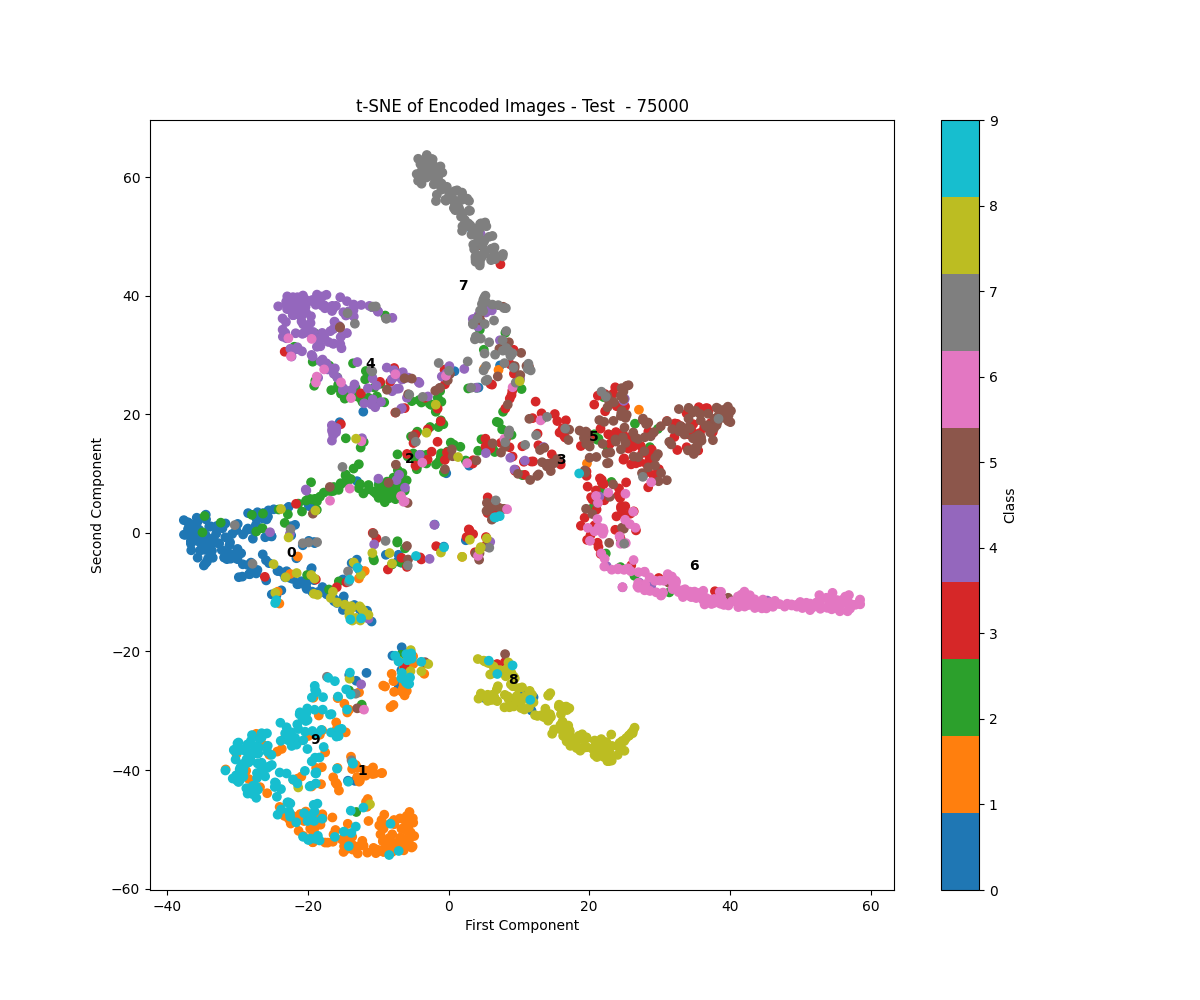}
    \caption*{75,000th}
  \end{subfigure}%
  \begin{subfigure}[t]{0.15\textwidth}
    \centering
    \includegraphics[width=\linewidth,trim={5cm 3.54cm 10cm 4cm},clip]{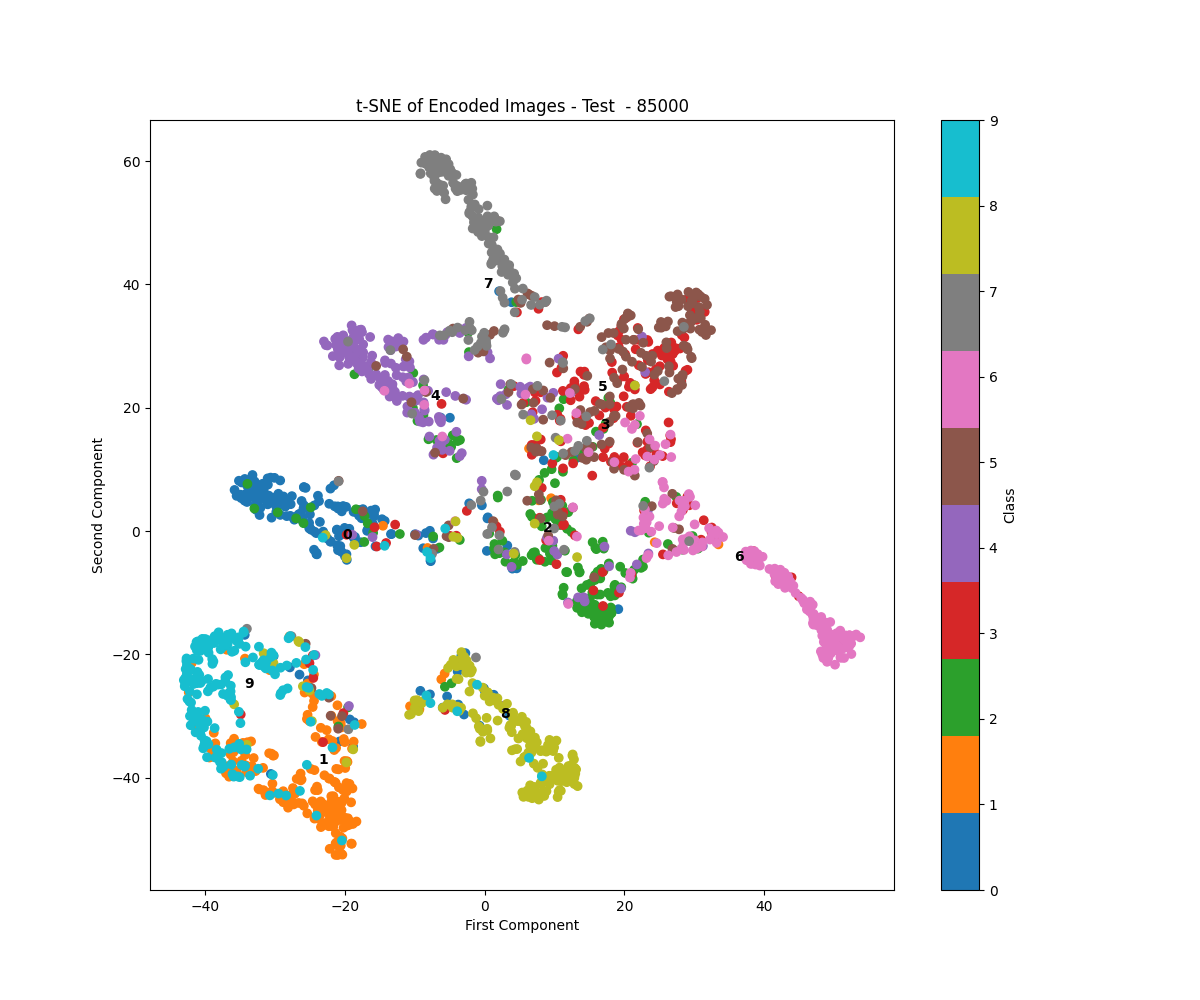}
    \caption*{85,000th}
  \end{subfigure}%
  \begin{subfigure}[t]{0.15\textwidth}
    \centering
    \includegraphics[width=\linewidth,trim={5cm 3.54cm 10cm 4cm},clip]{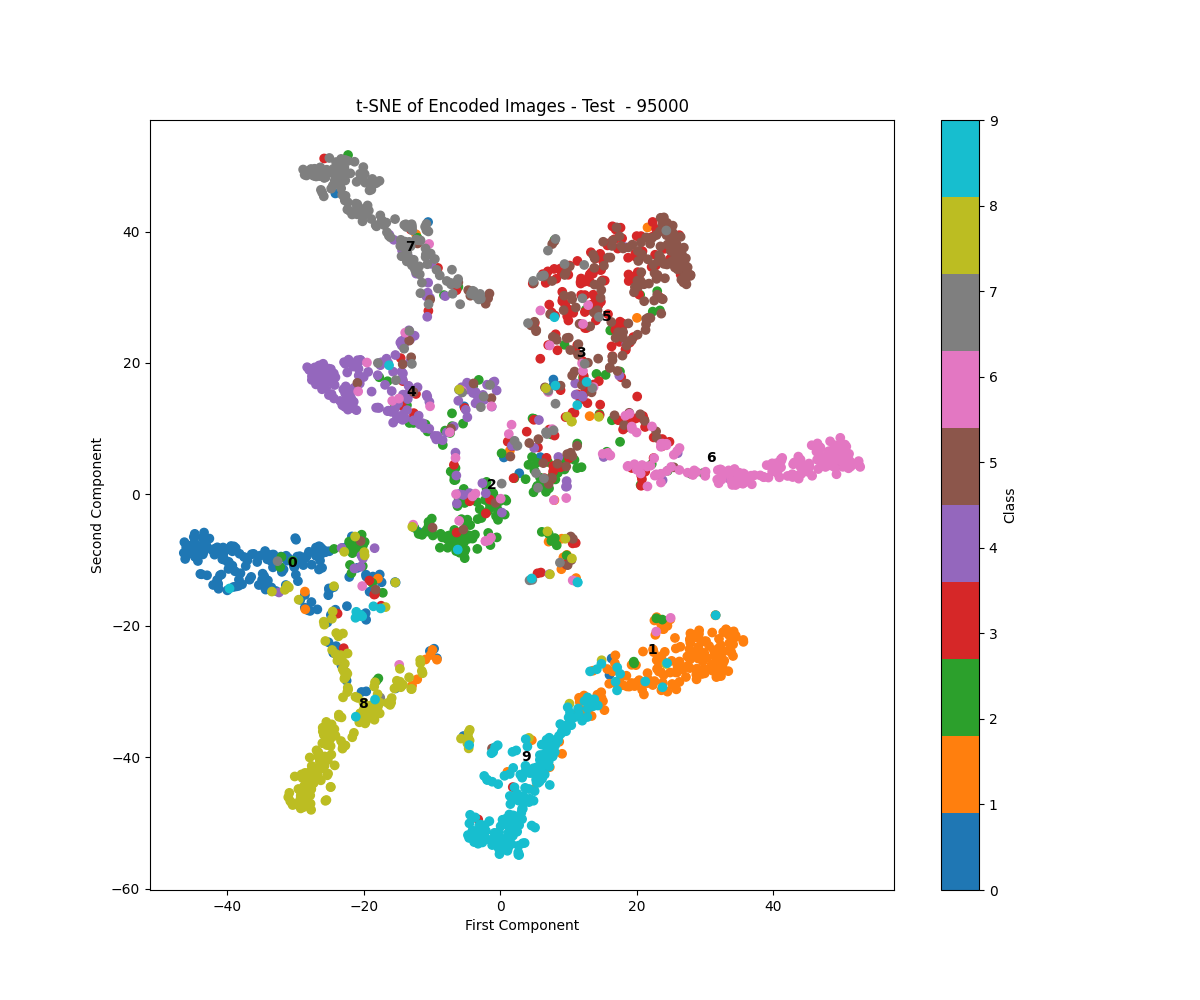}
    \caption*{95,000th}
  \end{subfigure}%
    \begin{subfigure}[t]{0.15\textwidth}
    \centering
    \includegraphics[width=\linewidth,trim={5cm 3.54cm 10cm 4cm},clip]{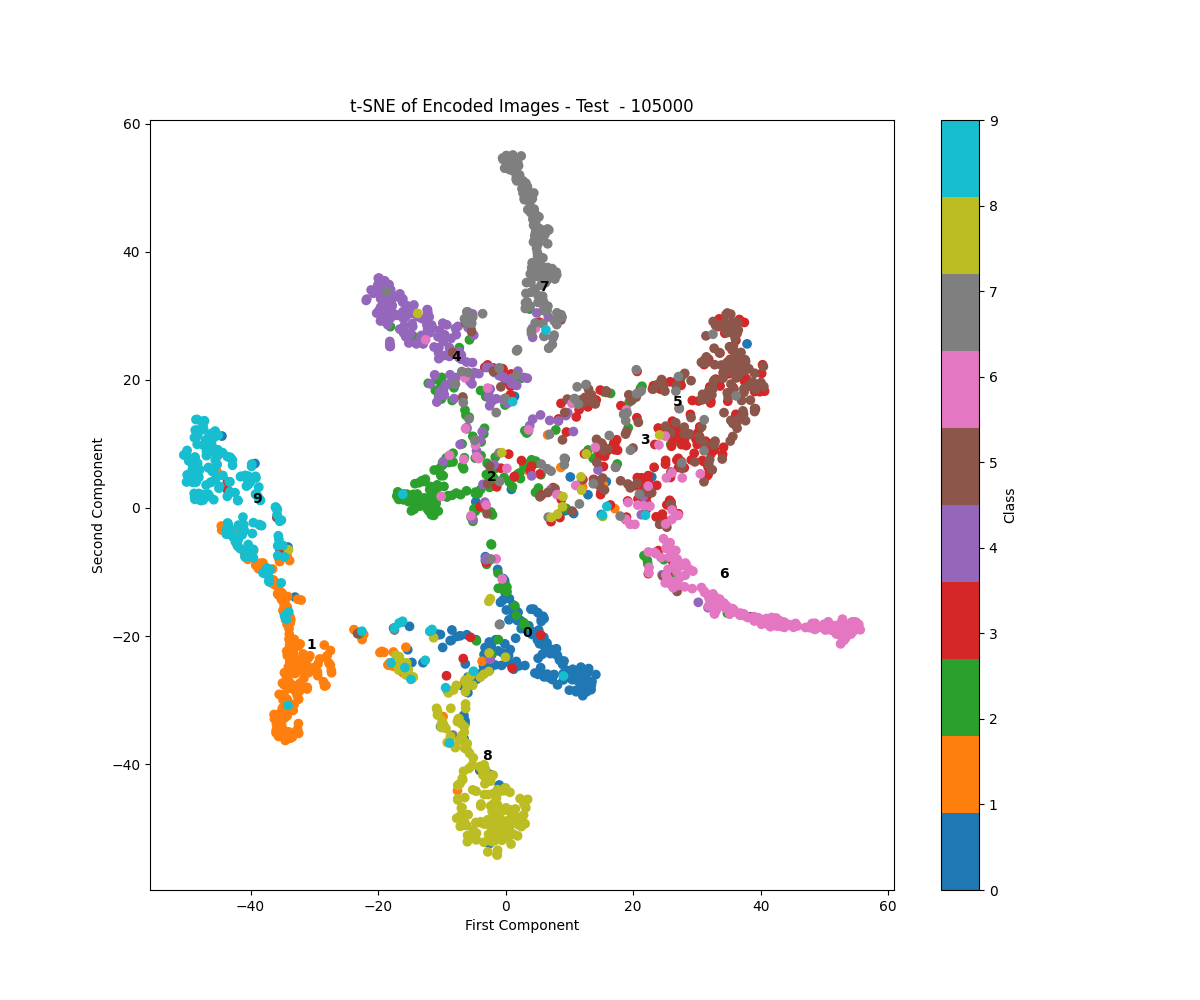}
    \caption*{105,000th}
  \end{subfigure}%
\begin{subfigure}[t]{0.15\textwidth}
    \centering
    \includegraphics[width=\linewidth,trim={5cm 3.54cm 10cm 4cm},clip]{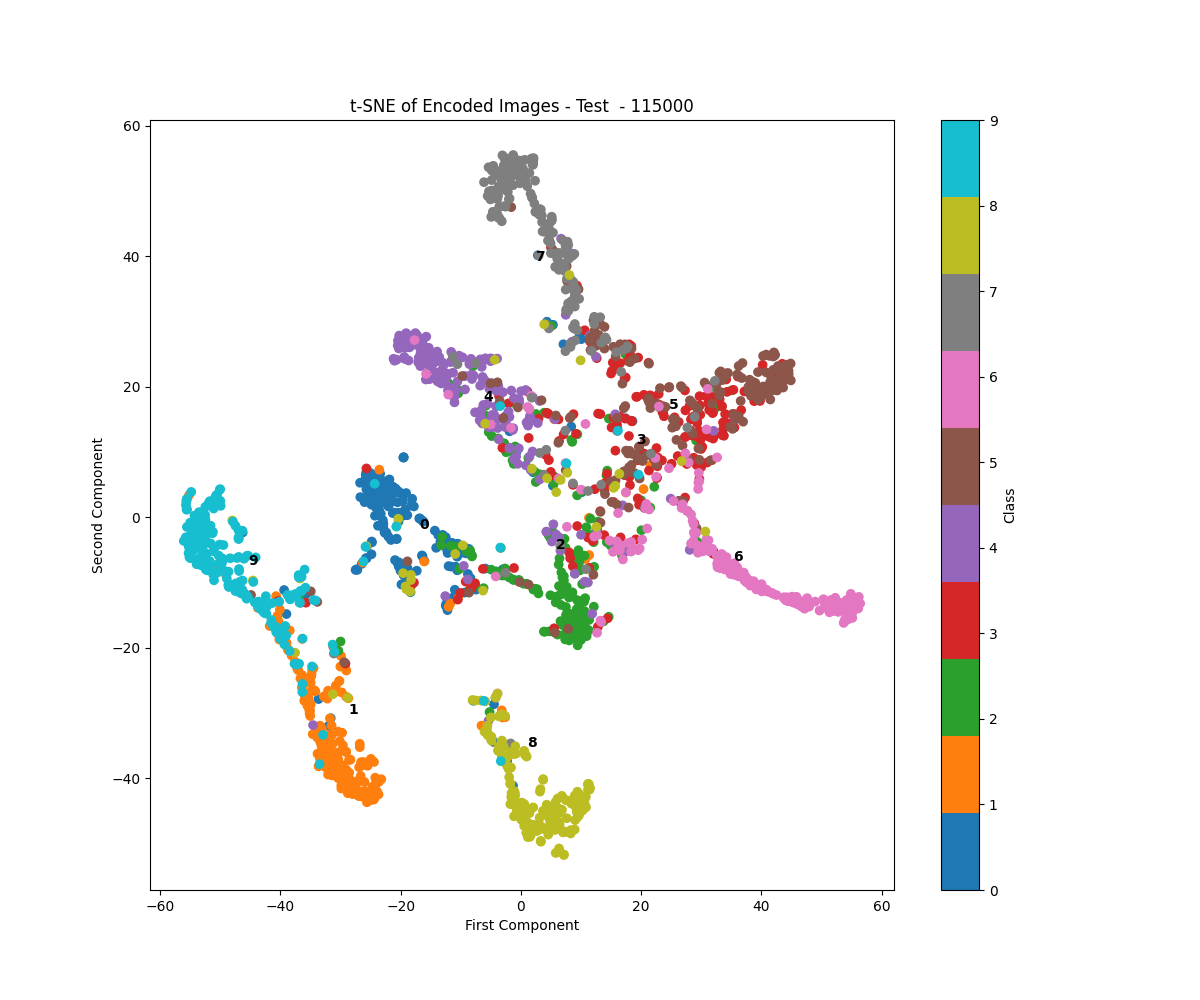}
    \caption*{115,000th}
  \end{subfigure}%
  
  \begin{subfigure}[t]{0.15\textwidth}
    \centering
    \includegraphics[width=\linewidth,trim={5cm 3.54cm 10cm 4cm},clip]{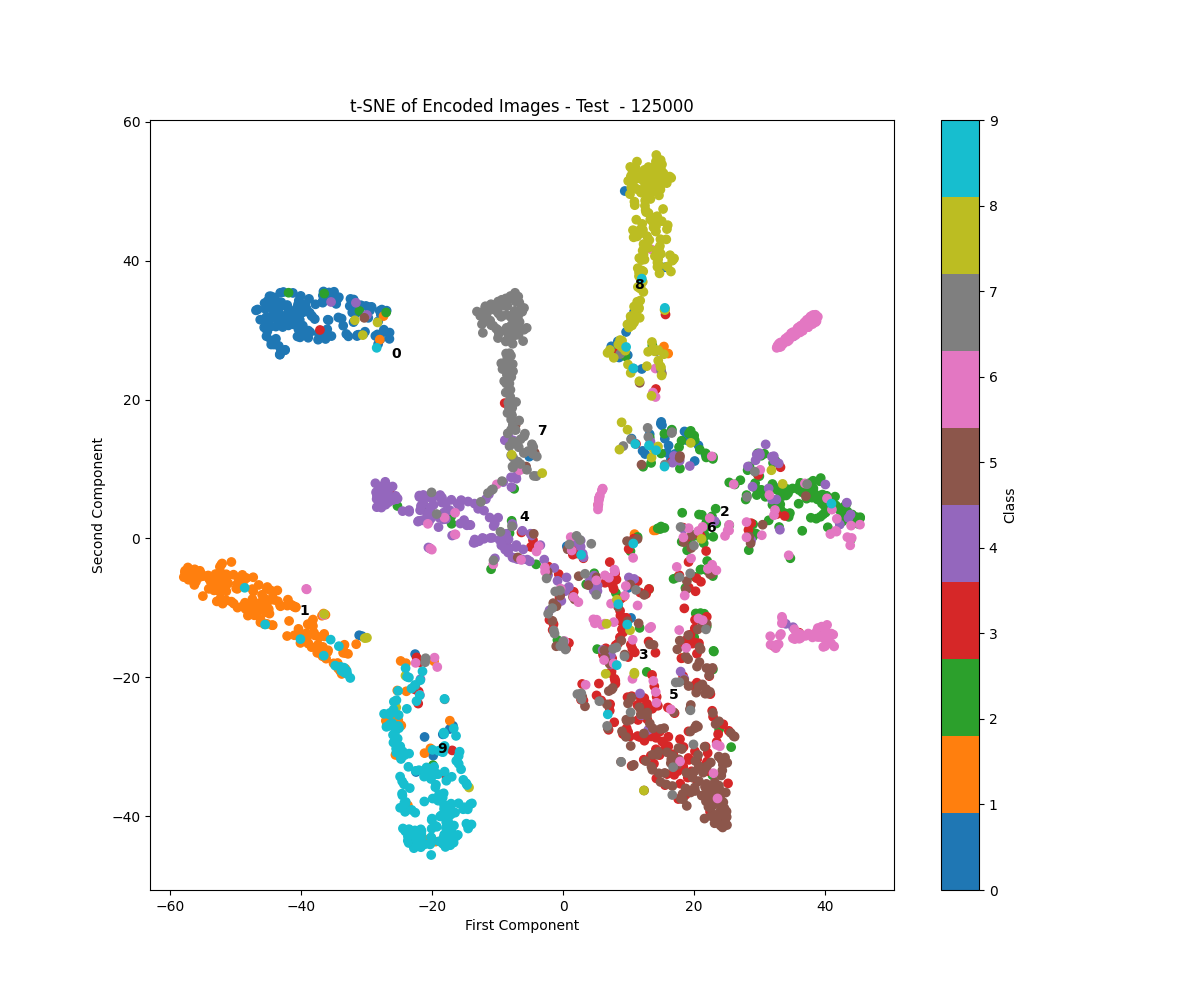}
    \caption*{125,000th}
  \end{subfigure}%
  \begin{subfigure}[t]{0.15\textwidth}
    \centering
    \includegraphics[width=\linewidth,trim={5cm 3.54cm 10cm 4cm},clip]{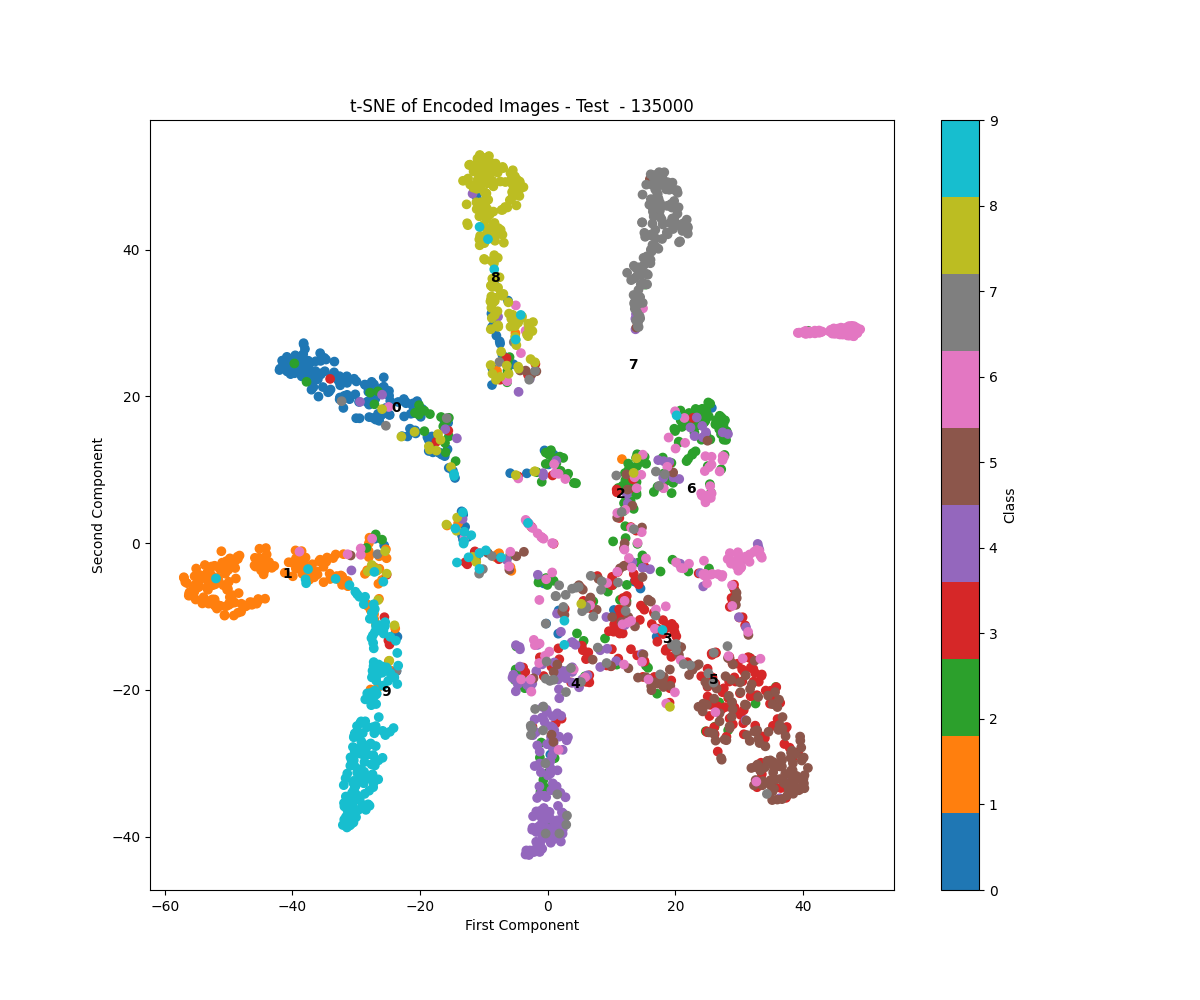}
    \caption*{135,000th}
  \end{subfigure}%
  \begin{subfigure}[t]{0.15\textwidth}
    \centering
    \includegraphics[width=\linewidth,trim={5cm 3.54cm 10cm 4cm},clip]{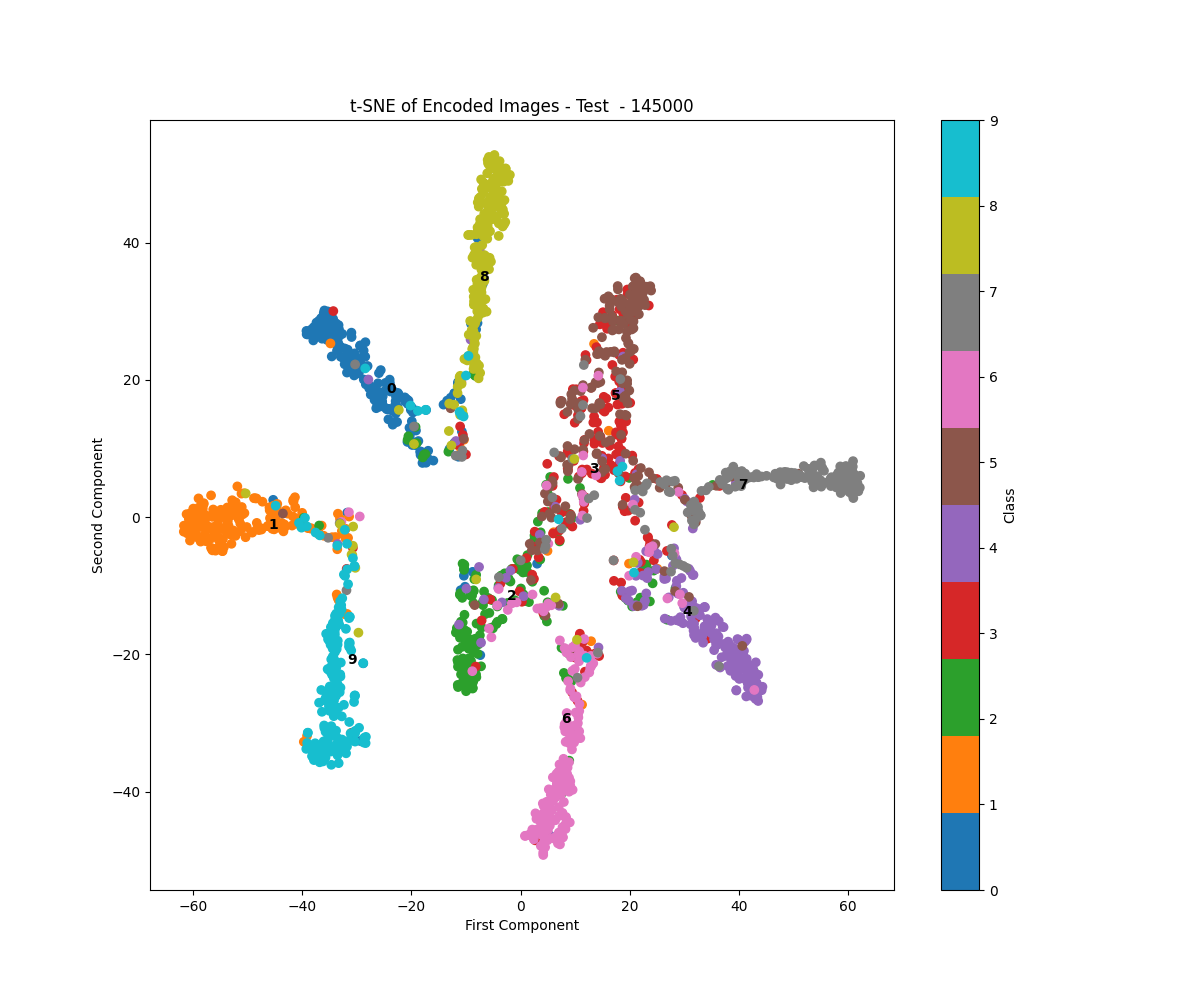}
    \caption*{145,000th}
  \end{subfigure}%
    \begin{subfigure}[t]{0.15\textwidth}
    \centering
    \includegraphics[width=\linewidth,trim={5cm 3.54cm 10cm 4cm},clip]{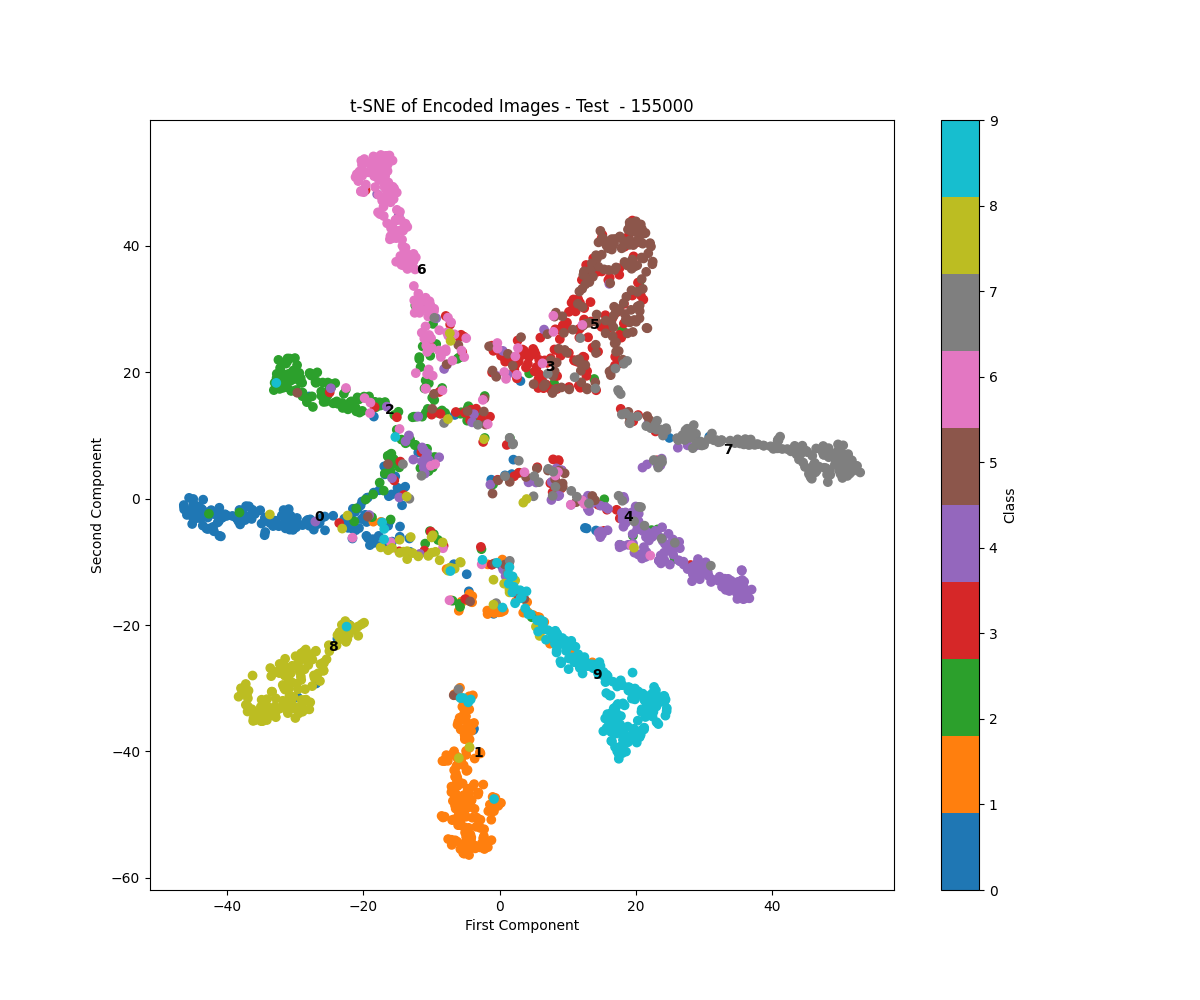}
    \caption*{155,000th}
  \end{subfigure}%
    \begin{subfigure}[t]{0.15\textwidth}
    \centering
    \includegraphics[width=\linewidth,trim={5cm 3.54cm 10cm 4cm},clip]{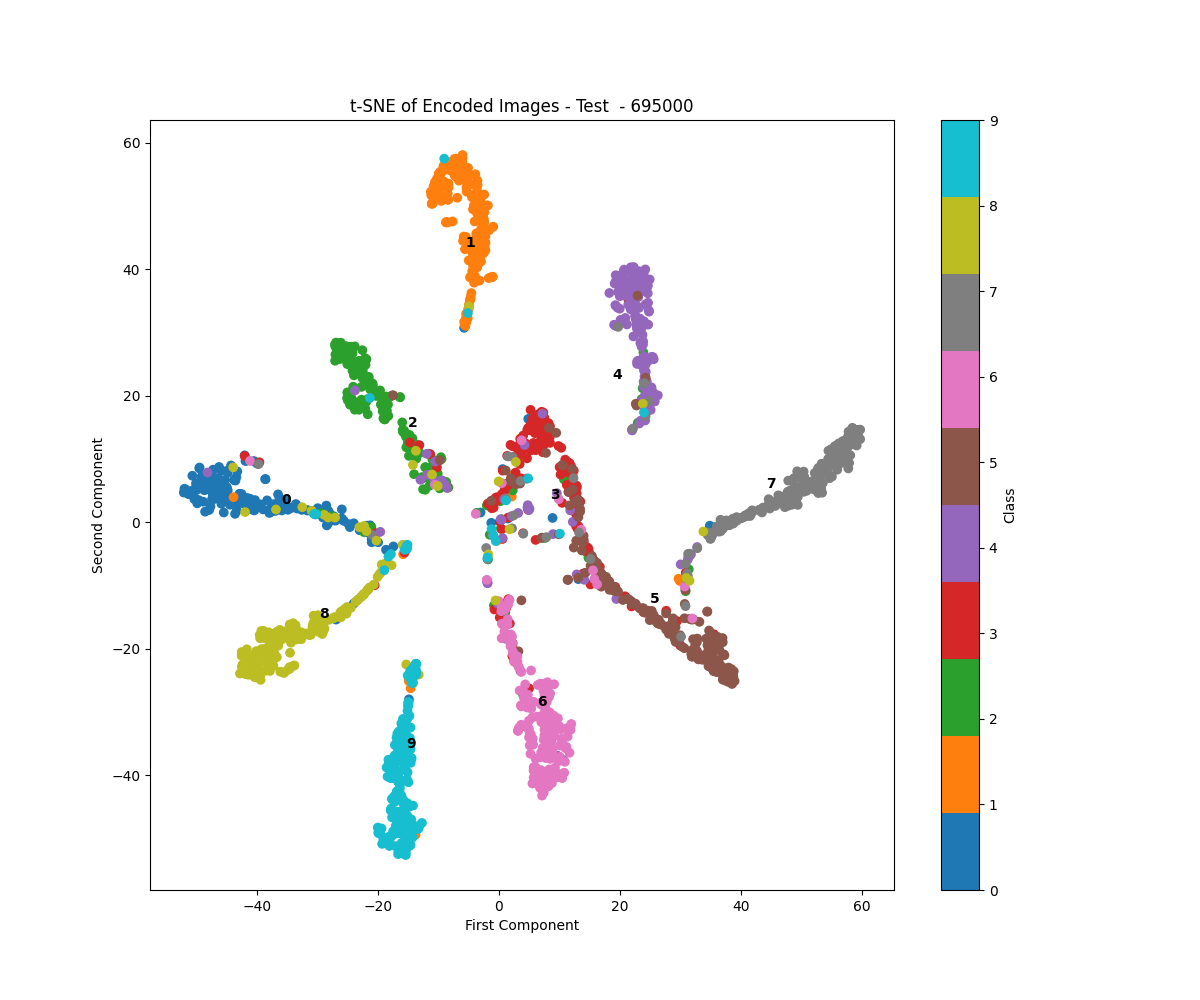}
    \caption*{695,000th}
  \end{subfigure}%
      \begin{subfigure}[t]{0.15\textwidth}
    \centering
    \includegraphics[width=\linewidth,trim={5cm 3.54cm 10cm 4cm},clip]{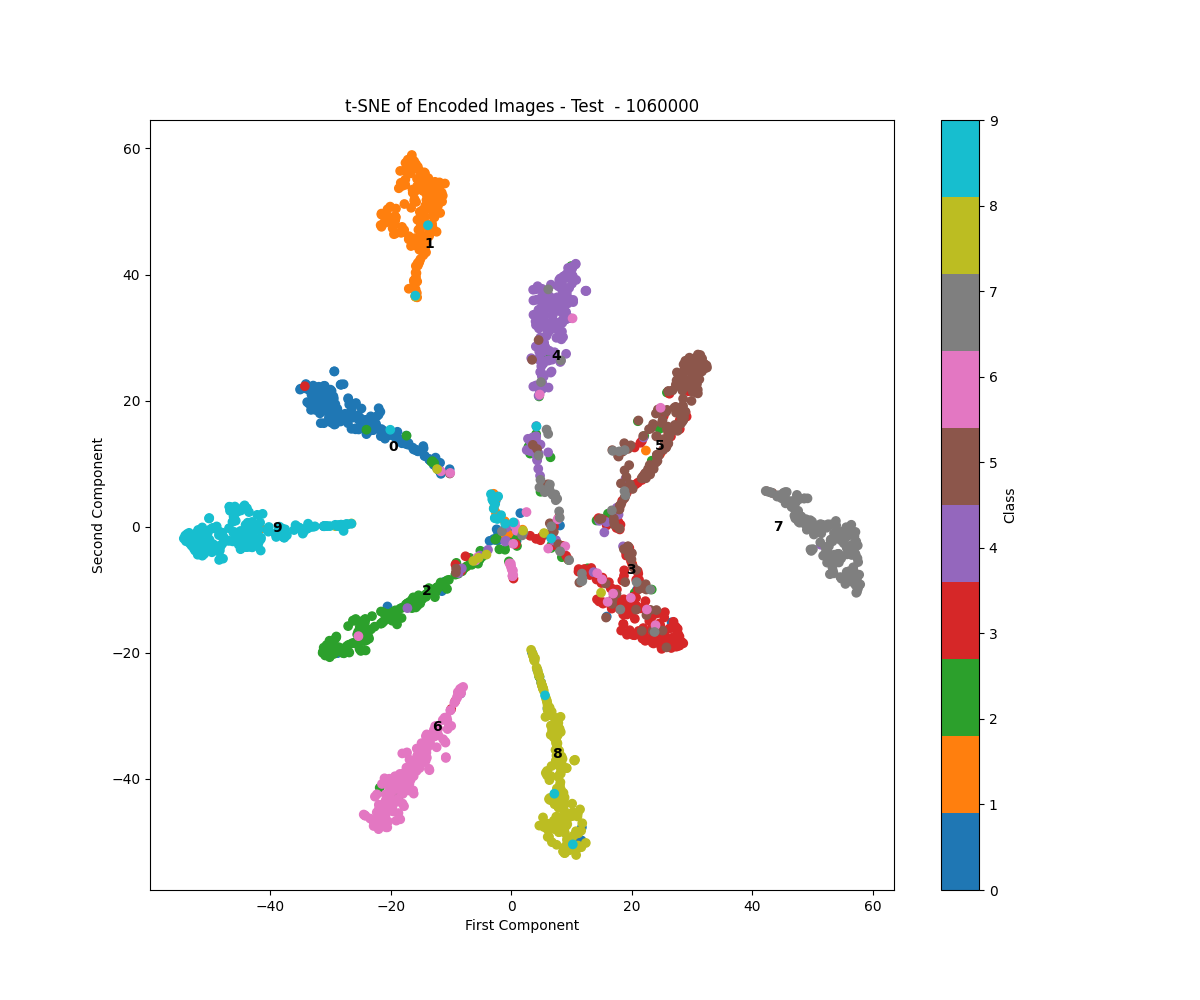}
    \caption*{1,060,000th}
  \end{subfigure}%
  \hfill
  \caption{Continual Learning Properties of SimO with AFCL Framework}
  \label{fig:continual_simo}
\end{figure*}
\section{Ablation Study}

Our ablation studies provide crucial insights into the effectiveness of SimO's key components. 

\textbf{Orthogonality Leaning Factor} When removing the orthogonality constraint from our loss function, we observed a significant degradation in performance. The model failed to learn representations for all classes, instead converging to a state where only 4 out of 10 classes from CIFAR-10 were distinguishable, with the remaining 6 classes grouped together. This result underscores the critical role of the orthogonality leaning factor in SimO, acting as a regularizer that encourages the model to utilize the full dimensionality of the embedding space and prevent the clustering of multiple classes into a single region which validate.
In our initial experiments with batch composition, we encountered an interesting phenomenon where certain classes dominated the loss function and, consequently, the embedding space. This dominance prevented the model from learning adequate representations for the other classes, resulting in unstable learning. To address this issue, we implemented a class sampling strategy inspired by techniques used in meta-learning, randomly selecting less than 50\% of the total number of classes for each batch. This approach led to more balanced learning across all classes increasing stability in the learning process.

\textbf{Lower Bound of Embedding Dimension} To explore the lower bounds of the embedding dimension and understand the compressive capabilities of SimO, we conducted an experiment where we pretrained a ResNet18 model with a projection head outputting only 2 dimensions, maintaining the orthogonality learning component. Remarkably, we achieved a clustering accuracy of 60\% on CIFAR-10 with this extreme dimensionality reduction. This result is particularly impressive given that standard contrastive learning methods typically struggle with such low dimensions, often failing to separate classes meaningfully. This demonstrates the power of SimO's orthogonality constraint in creating discriminative embeddings even in very low-dimensional spaces, pointing to its potential in scenarios where compact representations are required.



\section{Discussion and Limitations}

In our proposed framework, the embedding space generated by the SimO loss exhibits notable geometric properties (Figure~\ref{fig:cifar10_embeddings_vis}, Figure~\ref{fig:pairwise_manifold}) that can be interpreted through the lenses of stratified spaces, quotient topology, and fiber bundles. Specifically, we can view the overall embedding space as a stratified space, where each stratum corresponds to a distinct class neighborhood. This structure is facilitated by the orthogonality encouraged by our loss function, promoting clear separations between classes while maintaining cohesive intra-class relationships. Furthermore, we propose considering a quotient topology in which points within the same class neighborhood are identified, simplifying the representation of the embedding space to a point for each class. This transformation not only highlights the distinctness of classes but also emphasizes their orthogonality in the learned space. Additionally, our method generates a structure reminiscent of a fiber bundle, where each fiber corresponds to a specific class and is orthogonal to other fibers. This fiber bundle-like organization allows for a rich representation of class relationships and facilitates a more interpretable understanding of the learned embeddings. Collectively, these geometric interpretations underscore the robustness and effectiveness of our SimO loss with our AFCL framework in structuring embeddings that balance class separation with interpretability requiring small batch sizes unlike other loss functions.

While SimO demonstrates significant advancements in contrastive learning, our extensive experimentation has revealed several important limitations and areas for future research.

\textbf{Redefinition of Similarity Metrics}: A key finding of our work is that embeddings learned through SimO no longer adhere to traditional similarity measures such as cosine similarity. This departure from conventional metrics necessitates a paradigm shift in how we evaluate similarity in the embedding space. Our SimO loss itself emerges as the most appropriate measure of similarity or dissimilarity between embeddings. This also presents challenges for integration with existing systems and methods that rely on cosine similarity. Future work should focus on developing efficient computational methods for this new similarity metric and exploring its theoretical properties.

\textbf{Sensitivity to Data Biases}: Our method's ability to learn fine-grained representations comes with increased sensitivity to biases present in the training data. This is particularly evident in the case of background biases in object recognition tasks. For instance, our model struggled to separate the neighborhoods of Dog and Cat classes even though it learned  from the 120,000th iteration to the 1 millionth iteration, despite having learned most other classes effectively. This sensitivity necessitates robust data augmentation techniques to mitigate the impact of such biases. While this requirement for strong augmentation can be seen as a limitation, it also highlights SimO's potential for detecting and quantifying dataset biases, which could be valuable for improving dataset quality and fairness in machine learning models.

\textbf{The Orthogonality Learning Factor}: The performance of SimO is notably influenced by the orthogonality learning factor, a hyperparameter that balances the trade-off between similarity and orthogonality objectives. Finding the optimal value for this factor presents a challenge we term "the curse of orthogonality." Too low a factor leads to insufficient separation between class neighborhoods, while too high a factor can result in overly rigid embeddings that fail to capture intra-class variations. Our experiments show that this factor often needs to be tuned specifically for each dataset and task, which can be computationally expensive. Developing adaptive methods for automatically adjusting this factor during training represents an important direction for future research.

\textbf{Computational Complexity}: While not unique to SimO, the computational requirements for optimizing orthogonality in high-dimensional spaces are substantial. This can limit the applicability of our method to very large datasets or in resource-constrained environments. Future work should explore approximation techniques or more efficient optimization strategies to address this limitation.

Despite these limitations, we believe that SimO represents a significant step forward in contrastive learning. The challenges identified here open up exciting new avenues for research in representation learning, similarity metrics, and bias mitigation in machine learning models. Addressing these limitations will not only improve SimO but also deepen our understanding of the fundamental principles underlying effective representation learning.     

\section{Conclusion}

Our AFCL method introduces the SimO loss function as a novel approach to contrastive learning, effectively addressing several critical challenges related to embedding space utilization and interoperability. By optimizing both the similarity and orthogonality of embeddings, SimO prevents dimensionality collapse and ensures that class representations remain distinct, even in lower dimensions, requiring smaller batch sizes and embedding dimensions.

Our experimental results on the CIFAR-10 dataset demonstrate the efficacy of SimO in generating structured and discriminative embeddings with minimal computational overhead. Notably, our method achieves impressive test accuracy as early as the first epoch. Although there are limitations, such as sensitivity to data biases and dependence on specific hyperparameters, SimO paves the way for future advancements in enhancing contrastive learning techniques and managing embedding spaces more effectively.

\section*{Acknowledgment}
Google AI/ML Developer Programs team supported this work by providing Google Cloud Credit.

\section*{Reproducibility Statement}
We provide detailed proof for all the lemmas and theorems in the Appendices. Code will be shared publicly after publication.

\bibliography{iclr2025_conference}
\bibliographystyle{iclr2025_conference}

\appendix
\section{Appendix}

\begin{theorem}[SimO semi-metric space]
\label{theorem:semi_metric}

Let $([0, 1]^n, d')$ and $([0, 1]^n, d'')$ be two spaces where $[0, 1]^n$ is the n-dimensional unit hypercube and $d'$ and $d''$ are defined as:

\[ d'(e_i, e_j) = \frac{d_{ij}}{\epsilon + o_{ij}} = \frac{|e_i - e_j|^2}{\epsilon + (e_i \cdot e_j)^2} \]

\[ d''(e_i, e_j) = \frac{o_{ij}}{\epsilon + d_{ij}} = \frac{(e_i \cdot e_j)^2}{\epsilon + |e_i - e_j|^2} \]

where $e_i, e_j \in [0, 1]^n$, $d_{ij} = |e_i - e_j|^2$ is the squared Euclidean distance, and $o_{ij} = (e_i \cdot e_j)^2$ is the squared dot product.

Then $([0, 1]^n, d')$ and $([0, 1]^n, d'')$ are a semimetric space.
\end{theorem}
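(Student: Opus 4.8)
The plan is to verify directly, from the closed-form expressions, the three defining axioms of a semimetric listed in the Preliminaries — non-negativity, symmetry, and the identity of indiscernibles — handling $d'$ and $d''$ in parallel since both are built from the same two ingredients $d_{ij} = |e_i - e_j|^2$ and $o_{ij} = (e_i \cdot e_j)^2$. No work is required for the triangle inequality: the content of the theorem is precisely that we drop it, so a short remark (ideally backed by an explicit three-point configuration in $[0,1]^2$ that violates $d'(x,z) \le d'(x,y) + d'(y,z)$) is all that is needed to justify the ``semi'' qualifier.

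First I would dispatch the two easy axioms. Non-negativity: in both $d'$ and $d''$ the numerator ($d_{ij}$ or $o_{ij}$) is a square, hence $\ge 0$, while the denominator has the form $\epsilon + (\text{square})$ with $\epsilon > 0$, hence is strictly positive and bounded below by $\epsilon$ on all of $[0,1]^n \times [0,1]^n$; therefore each ratio is well defined and non-negative. Symmetry: $|e_i - e_j|^2 = |e_j - e_i|^2$ and $(e_i \cdot e_j)^2 = (e_j \cdot e_i)^2$, so numerator and denominator are each symmetric in the pair $(e_i, e_j)$, and hence so is the quotient.

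The substantive step is the identity of indiscernibles, and this is where I expect the real obstacle to lie, because the two functions behave very differently. For $d'$ the argument is clean: since the denominator $\epsilon + o_{ij}$ is finite and strictly positive, $d'(e_i, e_j) = 0$ if and only if $|e_i - e_j|^2 = 0$, which, $|\cdot|$ being a genuine norm on $\R^n$, holds exactly when $e_i = e_j$ — giving both implications at once. For $d''$ the same reasoning only yields $d''(e_i, e_j) = 0 \iff (e_i \cdot e_j)^2 = 0 \iff e_i \perp e_j$, which is emphatically not equivalent to $e_i = e_j$; indeed $d''(e,e) = |e|^4/\epsilon$, which is nonzero whenever $e \ne 0$. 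So as literally stated the identity of indiscernibles fails for $d''$ on the unit hypercube. To make the $d''$ half go through I would need either to restrict the domain so as to exclude this degeneracy (for instance to a fixed-radius sphere, or to pairs drawn from distinct class neighbourhoods, matching how $d''$ is actually used in the loss), or to weaken the claimed conclusion for $d''$ to that of a symmetric premetric/dissimilarity. My plan would therefore be to (a) prove the full semimetric claim for $d'$ as above, (b) for $d''$ add the hypothesis that rules out the diagonal degeneracy and then verify the ``only if'' direction under it, and (c) supply the explicit triangle-inequality counterexample so that the gap between semimetric and metric is documented rather than assumed.
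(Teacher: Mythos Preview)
Your plan for $d'$ is exactly what the paper does: check non-negativity, symmetry, and identity of indiscernibles directly from the formulas, then exhibit a concrete three-point configuration in $[0,1]^2$ (the paper uses $e_i=(1,0)$, $e_j=(0,1)$, $e_k=(1,1)$) to witness that the triangle inequality fails. So on the $d'$ side there is nothing to add.

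Your caution about $d''$ is well placed and in fact sharper than the paper's treatment. The paper's proof handles identity of indiscernibles only for $d'$ and then writes ``(same for $d''$)'' without further argument. But as you observe, the argument is \emph{not} the same: $d''(e_i,e_j)=0$ is equivalent to $e_i\cdot e_j=0$, and $d''(e,e)=|e|^4/\epsilon>0$ for $e\neq 0$, so on $[0,1]^n$ as stated the identity of indiscernibles fails in both directions for $d''$. The theorem as written is therefore not correct for $d''$, and the paper's proof contains a genuine gap at that parenthetical. Your proposed remedies --- either restricting the domain so the degeneracy is excluded, or downgrading the claim on $d''$ to a symmetric non-negative dissimilarity --- are the right way to repair the statement; the paper does neither.
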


\begin{proof}
We will prove that $d'$ satisfies the following properties (1, 2, and 3 but not 4):

1. Non-negativity: $d'(e_i, e_j) \geq 0$

2. Identity of indiscernibles: $d'(e_i, e_j) = 0 \iff e_i = e_j$

3. Symmetry: $d'(e_i, e_j) = d'(e_j, e_i)$

4. The triangle inequality property: $d'(e_i, e_k) \leq d'(e_i, e_j) + d'(e_j, e_k)$ for all $e_i, e_j, e_k \in [0, 1]^n$

1. Non-negativity:
   $d_{ij} = |e_i - e_j|^2 \geq 0$ (as it's a squared term)
   $o_{ij} = (e_i \cdot e_j)^2 \geq 0$ (as $e_i, e_j \in [0, 1]^n$, their dot product is always positive)
   Therefore, $d'(e_i, e_j) = \frac{d_{ij}}{\epsilon+ o_{ij}} \geq 0$ and $d''(e_i, e_j) = \frac{o_{ij}}{\epsilon+ d_{ij}} \geq 0$

2. Identity of indiscernibles:
   ($\Rightarrow$) If $e_i = e_j$, then $d_{ij} = 0$ and $o_{ij} = |e_i|^4 > 0$, so $d'(e_i, e_j) = 0$
   ($\Leftarrow$) If $d'(e_i, e_j) = 0$, then $d_{ij} = 0$ (as $o_{ij} \geq 0$), which implies $e_i = e_j$. (same for $d''$)

3. Symmetry:
   
   $d'(e_i, e_j) = \frac{|e_i - e_j|^2}{ \epsilon + (e_i \cdot e_j)^2} = \frac{|e_j - e_i|^2}{\epsilon + (e_j \cdot e_i)^2} = d'(e_j, e_i)$
   
   $d''(e_i, e_j) = \frac{ (e_j \cdot e_i)^2}{ \epsilon + |e_i - e_j|^2} = \frac{(e_j \cdot e_i)^2}{\epsilon + |e_j - e_i|^2} = d''(e_j, e_i)$

4. For the triangle inequality to hold, we must have:

\[ d'(e_i, e_j) \leq d'(e_i, e_k) + d'(e_j, e_k) \]
\[ d''(e_i, e_j) \leq d''(e_i, e_k) + d''(e_j, e_k) \]


Let $e_i$ = (1,0), $e_j$ = (0,1), $e_k$ = (1,1), and $\epsilon$ very small.
Then $d'(e_i, e_k) = 1/\epsilon$, while $d'(e_i, e_j) + d'(e_j, e_k) = 2$.












Hence equality doesn't hold for d'.

Therefore, we conclude that the function $d'$ does not satisfy the triangle inequality for all vectors in $[0, 1]^n$, and thus $([0, 1]^n, d')$ is not a metric space. It remains a semimetric space, satisfying non-negativity, the identity of indiscernibles, and symmetry, but not the triangle inequality.

\end{proof}
\begin{theorem}[SimO Dimentionality Collapse Prevention]
\label{theorem:dim_collapse}

The loss function $\mathcal{L_\text{SimO}}$ prevents dimensionality collapse for dissimilar (negative) classes through its orthogonality term.
\end{theorem}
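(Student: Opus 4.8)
The plan is to first turn the informal claim into a precise statement about the rank (equivalently, the number of nonzero singular values) of the matrix of embeddings, and then show that the $y=0$ branch of $\mathcal{L_\text{SimO}}$ is strictly larger on low-rank (collapsed) configurations than on full-rank ones, so that minimizing the loss drives the dissimilar embeddings away from any collapsed subspace. Concretely, fix $m$ embeddings $e_1,\dots,e_m\in[0,1]^n$ of pairwise-dissimilar items --- for instance the class means appearing in $\mathcal{L}_{mean\_dissimilar}$ in Algorithm~\ref{alg:simo_updated} --- set $y=0$, and write the loss as $\mathcal{L}^{(0)} = \big(\sum_{i<j} o_{ij}\big)\big/\big(\epsilon + \sum_{i<j} d_{ij}\big)$ with $o_{ij} = (e_i\cdot e_j)^2$ and $d_{ij} = \|e_i-e_j\|^2$. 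Since the domain is the bounded cube we have $d_{ij}\le n$, hence $\epsilon + \sum_{i<j} d_{ij} \le \epsilon + \binom{m}{2} n =: C$ and therefore $\mathcal{L}^{(0)} \ge C^{-1}\sum_{i<j} o_{ij}$. Thus it suffices to lower-bound the orthogonality numerator on every collapsed configuration; the bounded denominator is precisely what rules out the degenerate possibility of decreasing the loss by inflating distances while keeping the dot products large.

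The core of the argument is spectral. Let $G = [\,e_i\cdot e_j\,]_{i,j=1}^{m}$ be the Gram matrix of the embeddings, with eigenvalues $\lambda_1,\dots,\lambda_m\ge 0$. I would use the identity $\|G\|_F^2 = \sum_i \|e_i\|^4 + 2\sum_{i<j} o_{ij}$ together with $\operatorname{tr} G = \sum_i \|e_i\|^2 =: S$ and $\|G\|_F^2 = \sum_k \lambda_k^2$. If the embeddings collapse into a $k$-dimensional subspace then $\operatorname{rank} G \le k$, so at most $k$ of the $\lambda_k$ are nonzero and Cauchy--Schwarz gives $\|G\|_F^2 \ge S^2/k$. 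Combining these, $\sum_{i<j} o_{ij} \ge \tfrac12\big(S^2/k - \sum_i \|e_i\|^4\big)$, hence $\mathcal{L}^{(0)} \ge \tfrac{1}{2C}\big(S^2/k - \sum_i \|e_i\|^4\big)$. In the natural near-balanced regime $\|e_i\|^2\approx c$ this reads $\mathcal{L}^{(0)} \ge \tfrac{mc^2}{2C}(\tfrac{m}{k}-1)$, which is strictly positive for $k<m$ and strictly decreasing in $k$: every rank-$k$ configuration with $k<m$ is strictly costlier than a full-rank one, and when $m\le n$ the value $0$ of the numerator is attained only by mutually orthogonal embeddings (e.g.\ scaled standard basis vectors at corners of the cube), which have rank $m$. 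So no collapsed configuration can minimize $\mathcal{L_\text{SimO}}$, and, reading the inequality the other way, any rank-$k$ configuration carries a loss floor that shrinks as $k$ grows, so descent on the orthogonality term is pushed toward higher-rank configurations --- i.e.\ the term actively inflates the embedding rank. To promote this batch-level statement to a claim about the whole embedding space, I would note that $\mathcal{L_\text{SimO}}$ in Algorithm~\ref{alg:simo_updated} is applied to class means over randomly sampled subsets of classes and that the training objective averages over these subsets; a routine averaging/union argument over the sampled subsets then transfers the rank lower bound to the full collection of class-mean embeddings, up to the combinatorial sampling factor.

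I expect the main obstacle to be two-sided. On one side, because the theorem is stated qualitatively, the real commitment is choosing the right quantitative surrogate for ``dimensionality collapse'' --- I take it to be the rank (or effective rank) of $G$ --- and checking that this matches the sense used elsewhere in the paper, namely dissimilar classes becoming indistinguishable. On the other side, the genuinely technical point is controlling the ratio structure of the loss: one must be sure that minimizing $\sum o_{ij}/(\epsilon + \sum d_{ij})$ cannot be achieved by blowing up the denominator, and, if one wants the stronger dynamical statement, one must verify that $\nabla_{e}\mathcal{L_\text{SimO}}$ at a collapsed configuration has a nonzero component orthogonal to the collapsed subspace; both require care with the normalization of the embedding norms and with the role of $\epsilon$.
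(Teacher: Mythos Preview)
Your proposal is correct and is substantially more rigorous than the paper's own argument, which proceeds quite differently. The paper does not formalize ``dimensionality collapse'' at all; instead it observes that for $y=0$ the loss is $\sum o_{ij}/(\epsilon+\sum d_{ij})$, then states two short lemmas --- that minimizing $\sum o_{ij}$ drives each $e_i\cdot e_j\to 0$ and that maximizing $\sum d_{ij}$ spreads the embeddings apart --- and concludes verbally that orthogonal, well-separated vectors ``span different dimensions'' and hence cannot collapse. There is no spectral analysis, no Gram matrix, no rank surrogate, and no explicit handling of the ratio structure: the paper simply treats numerator and denominator as independent objectives. By contrast, you make the claim quantitative by identifying collapse with $\operatorname{rank} G\le k<m$, use Cauchy--Schwarz on the nonzero eigenvalues to get $\|G\|_F^2\ge S^2/k$, and convert this into a strictly positive loss floor that decreases in $k$; your use of the hypercube constraint to bound the denominator is exactly the step the paper omits and is what makes the ratio argument go through rather than being merely heuristic. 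What the paper's approach buys is brevity and an intuitive reading of the two terms; what yours buys is an actual theorem --- a rank-dependent lower bound on $\mathcal{L}^{(0)}$ --- together with a clear statement of the regime (near-balanced norms, $m\le n$) in which the bound is nontrivial, and an honest identification of the remaining dynamical question about the gradient at collapsed configurations, which the paper does not raise.
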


\begin{proof}
    Let $\mathcal{E} = \{e_1, \ldots, e_n\}$ be a set of embeddings in $\mathbb{R}^d$, where $n$ is the batch size and $d$ is the embedding dimension.

The loss function $\mathcal{L_\text{SimO}}$ is defined as:

\begin{equation}
\mathcal{L_\text{SimO}} = y \left[ \frac{ \sum_{i,j} d_{ij}}{\epsilon + \sum_{i,j} o_{ij}}\right] + (1 - y)  \left[ \frac{\sum_{i,j} o_{ij}}{\epsilon + \sum_{i,j} d_{ij}}\right]
\end{equation}

where:
\begin{itemize}
\item $i, j \in \{1, \ldots, n\}$, $i \neq j$, $i < j$
\item $y \in \{0, 1\}$ is a binary label for the entire batch (1 for similarity, 0 for dissimilarity)
\item $d_{ij} = \|e_i - e_j\|^2 = \|e_i\|^2 + \|e_j\|^2 - 2e_i \cdot e_j$ is the squared Euclidean distance
\item $o_{ij} = (e_i \cdot e_j)^2$ is the squared dot product
\item $\epsilon > 0$ is a small constant to prevent division by zero
\end{itemize}

We proceed by analyzing the behavior of $\mathcal{L_\text{SimO}}$ for dissimilar pairs and showing how it encourages properties that prevent dimensionality collapse.

\begin{enumerate}
\item For dissimilar pairs ($y = 0$), $\mathcal{L_\text{SimO}}$ reduces to:

\begin{equation}
\mathcal{L_\text{SimO}} = \frac{\sum_{i,j} o_{ij}}{\epsilon + \sum_{i,j} d_{ij}}
\end{equation}

\item To minimize this loss, we must minimize $\sum_{i,j} o_{ij}$ and maximize $\sum_{i,j} d_{ij}$.

\item We first prove two key lemmas:

\begin{lemma}
Minimizing $\sum_{i,j} o_{ij}$ encourages orthogonality between dissimilar embeddings.
\end{lemma}

\begin{proof}
\begin{itemize}
\item $\forall i,j: o_{ij} = (e_i \cdot e_j)^2 \geq 0$
\item Minimizing $\sum_{i,j} o_{ij}$ implies minimizing each $o_{ij}$
\item Minimizing $(e_i \cdot e_j)^2$ pushes $e_i \cdot e_j \to 0$
\item $e_i \cdot e_j = 0 \iff e_i \perp e_j$
\end{itemize}
Therefore, minimizing $\sum_{i,j} o_{ij}$ encourages orthogonality between all pairs of dissimilar embeddings.
\end{proof}

\begin{lemma}
Maximizing $\sum_{i,j} d_{ij}$ encourages dissimilar embeddings to be far apart in the embedding space.
\end{lemma}

\begin{proof}
\begin{itemize}
\item $\forall i,j: d_{ij} = \|e_i - e_j\|^2 \geq 0$
\item Maximizing $\sum_{i,j} d_{ij}$ implies maximizing each $d_{ij}$
\item Maximizing $\|e_i - e_j\|^2$ increases the Euclidean distance between $e_i$ and $e_j$
\end{itemize}
Therefore, maximizing $\sum_{i,j} d_{ij}$ pushes dissimilar embeddings farther apart in the embedding space.
\end{proof}

\item Now, we show how these lemmas prevent dimensionality collapse:

\begin{enumerate}
\item By Lemma 1, $\mathcal{L_\text{SimO}}$ encourages orthogonality between dissimilar embeddings.
\begin{itemize}
\item Orthogonal vectors span different dimensions in the embedding space.
\item This prevents dissimilar embeddings from aligning along the same dimensions.
\end{itemize}

\item By Lemma 2, $\mathcal{L_\text{SimO}}$ simultaneously pushes dissimilar embeddings farther apart.
\begin{itemize}
\item This reinforces the distinctiveness of dissimilar embeddings.
\item It prevents dissimilar embeddings from collapsing to nearby points in the embedding space.
\end{itemize}

\item The combination of (a) and (b) ensures that:
\begin{itemize}
\item Dissimilar embeddings maintain their distinctiveness.
\item They occupy different regions and directions in the embedding space.
\item The effective dimensionality of the embedding space is preserved for dissimilar classes.
\end{itemize}
\end{enumerate}

\item Formally, let $\{e_i\}_{i=1}^k$ be a set of dissimilar embeddings. The loss function ensures:

\begin{itemize}
\item $\forall i \neq j: e_i \cdot e_j \approx 0$ (orthogonality)
\item $\forall i \neq j: \|e_i - e_j\|^2$ is maximized (separation)
\end{itemize}

These conditions directly contradict the definition of dimensionality collapse, where dissimilar embeddings would become very similar or identical.
\end{enumerate}

\end{proof}

\begin{theorem}[Curse of Orthogonality]
\label{theorem:orthogonality_curse}

In an $n$-dimensional embedding space, the number of classes that can be represented with mutually orthogonal embeddings is at most $n$.
\end{theorem}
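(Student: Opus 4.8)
The plan is to reduce the claim to the elementary linear-algebra fact that a set of mutually orthogonal nonzero vectors in $\R^n$ is linearly independent, together with the fact that a linearly independent subset of an $n$-dimensional vector space has at most $n$ elements. No heavy machinery is needed; the work is almost entirely in setting up the statement precisely.

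First I would fix notation. Suppose $m$ classes are assigned representative embeddings $v_1,\dots,v_m \in \R^n$ that are mutually orthogonal, i.e.\ $v_i \cdot v_j = 0$ whenever $i \neq j$. A genuine class representation requires each $v_i \neq \vzero$: the zero vector is orthogonal to everything and encodes no class, so it is excluded. In the language of the SimO loss, the orthogonality target $o_{ij} = (e_i\cdot e_j)^2 = 0$ is meaningful only when imposed on distinct nonzero embeddings.

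Next I would establish linear independence directly. Assume $\sum_{i=1}^m \alpha_i v_i = \vzero$ for scalars $\alpha_i$. Taking the Euclidean inner product of both sides with $v_k$ and using $v_i \cdot v_k = 0$ for $i \neq k$ gives $\alpha_k \, \|v_k\|^2 = 0$. Since $v_k \neq \vzero$, we have $\|v_k\|^2 > 0$, so $\alpha_k = 0$; as $k$ was arbitrary, every coefficient vanishes, and hence $\{v_1,\dots,v_m\}$ is linearly independent. Because $\dim \R^n = n$, a linearly independent subset of $\R^n$ has cardinality at most $n$, so $m \le n$, which is the assertion. I would also note the bound is tight, attained by the standard basis $e_1,\dots,e_n$, so $n$ is optimal.

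The proof involves no real obstacle in the computational sense; the one point that deserves care — the "hard part," such as it is — is the modeling convention rather than a calculation: the bound holds and is sharp precisely because we insist on one distinct nonzero embedding per class. Dropping nonzeroness or allowing coincident vectors would render the phrase "mutually orthogonal class embeddings" vacuous, so I would state this assumption explicitly before invoking the linear-independence argument.
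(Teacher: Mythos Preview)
Your proposal is correct and follows essentially the same approach as the paper: both argue that mutually orthogonal (nonzero) vectors are linearly independent and then invoke the dimension bound $\dim \R^n = n$. Your version is in fact more complete, since you actually carry out the linear-independence argument and make the nonzeroness assumption explicit, whereas the paper simply asserts these facts.
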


\begin{proof}

Let $\mathbb{R}^n$ be an $n$-dimensional embedding space. Consider a set of $k$ vectors $\{\mathbf{a}_1, \mathbf{a}_2, 
\ldots, 
\mathbf{a}_k\}$ in this space, where each vector represents the mean embedding of a distinct class.

Assume that these vectors are mutually orthogonal, i.e.,
\begin{equation}
\mathbf{a}_i \cdot \mathbf{a}_j = 0 \quad \text{for all} \quad 1 \leq i, j \leq k \quad \text{and} \quad i \neq j.
\end{equation}

In $\mathbb{R}^n$, the maximum number of mutually orthogonal vectors is equal to the dimension of the space, which is $n$. This is because any set of mutually orthogonal vectors must also be linearly independent, and the maximum number of linearly independent vectors in $\mathbb{R}^n$ is $n$.

Therefore, the maximum number of mutually orthogonal embeddings, and thus the maximum number of classes that can be represented with such embeddings, is $n$. 

Hence, $k \leq n.$

\end{proof}
\subsection{Johnson–Lindenstrauss Lemma}
\begin{lemma}[Johnson–Lindenstrauss Lemma]

 \( 0 < \epsilon < 1 \), and let \( X \) be a set of \( n \) points in \( \mathbb{R}^d \). There exists a mapping \( f: \mathbb{R}^d \rightarrow \mathbb{R}^k \) with \( k = O\left(\frac{\log n}{\epsilon^2}\right) \) such that for all \( x, y \in X \):

\[
(1 - \epsilon) \|x - y\|^2 \leq \|f(x) - f(y)\|^2 \leq (1 + \epsilon) \|x - y\|^2
\]

\end{lemma}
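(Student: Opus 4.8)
The plan is to give the standard proof of the Johnson–Lindenstrauss Lemma via a random linear projection and the concentration of the norm of a Gaussian vector. First I would fix the target dimension $k = C \epsilon^{-2} \log n$ for a suitable absolute constant $C$ (to be pinned down at the end), and define $f(x) = \frac{1}{\sqrt{k}} R x$, where $R \in \mathbb{R}^{k \times d}$ has i.i.d.\ $\mathcal{N}(0,1)$ entries. The key distributional fact is that for any fixed unit vector $u \in \mathbb{R}^d$, the random variable $\|Ru\|^2$ is a $\chi^2$ random variable with $k$ degrees of freedom, hence $\E[\|f(x)\|^2] = \|x\|^2$ when applied to $u = x/\|x\|$; so the map preserves each squared length in expectation. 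The substance is the tail bound.

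Next I would state and use the standard two-sided concentration inequality for $\chi^2$ variables: if $Z \sim \chi^2_k$, then for $0 < \epsilon < 1$,
\begin{equation}
\Pr\big[\, Z \geq (1+\epsilon)k \,\big] \leq \exp\!\big(-\tfrac{k}{4}(\epsilon^2 - \epsilon^3)\big), \qquad \Pr\big[\, Z \leq (1-\epsilon)k \,\big] \leq \exp\!\big(-\tfrac{k}{4}(\epsilon^2 - \epsilon^3)\big).
\end{equation}
This is proved by a Chernoff/Markov argument applied to $e^{tZ}$ using the moment generating function $(1-2t)^{-k/2}$ of a $\chi^2_k$ variable and optimizing over $t$; I would either cite this or sketch the exponential-moment computation rather than grinding through it. Applying this with the unit vector $u = (x-y)/\|x-y\|$ for a fixed pair $x \neq y$, and writing $Z = \|R u\|^2 = k \|f(x)-f(y)\|^2 / \|x-y\|^2$, gives that the event
\begin{equation}
(1-\epsilon)\|x-y\|^2 \leq \|f(x)-f(y)\|^2 \leq (1+\epsilon)\|x-y\|^2
\end{equation}
fails with probability at most $2\exp(-\tfrac{k}{4}(\epsilon^2-\epsilon^3))$.

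Finally I would take a union bound over all $\binom{n}{2} < n^2$ pairs of points in $X$: the probability that the desired inequality fails for some pair is at most $n^2 \cdot 2\exp(-\tfrac{k}{4}(\epsilon^2 - \epsilon^3))$. Choosing $k \geq \frac{8 \log n}{\epsilon^2 - \epsilon^3}$ (which is $O(\epsilon^{-2}\log n)$ since $\epsilon < 1$) makes this probability strictly less than $1$ — indeed at most $2/n^2 \cdot n^2 \cdot n^{-2} \to 0$ for $n \geq 2$ — so a mapping $f$ with the required property exists by the probabilistic method; in fact it exists with high probability, which also yields a randomized algorithm. The main obstacle, and the only step with real content, is establishing the $\chi^2$ tail bound with the right dependence on $\epsilon$ (the $\epsilon^2$ in the exponent is what makes $k$ scale like $\epsilon^{-2}$ rather than worse); everything else is bookkeeping. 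I would also remark that the restriction $0 < \epsilon < 1$ is exactly what is needed for $\epsilon^2 - \epsilon^3 > 0$ and for the lower-tail bound to be meaningful.
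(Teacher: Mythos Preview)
Your proof is the standard random-Gaussian-projection argument (Dasgupta--Gupta style) and is correct in outline; the only cosmetic blemish is the garbled arithmetic in the last paragraph, where the intended computation is that $\binom{n}{2}\cdot 2\exp(-\tfrac{k}{4}(\epsilon^2-\epsilon^3)) \leq \tfrac{n(n-1)}{2}\cdot 2 n^{-2} = \tfrac{n-1}{n} < 1$ once $k \geq 8\log n/(\epsilon^2-\epsilon^3)$.

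As for comparison with the paper: the paper does \emph{not} prove this lemma at all. It is stated as a classical external result and then invoked as a black box in the proof of the subsequent theorem on near-orthogonality. So there is no ``paper's own proof'' to compare against; you have supplied a genuine proof where the paper simply cites the statement.
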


\begin{theorem}[Johnson-Lindenstrauss Lemma Addressing the Curse of Orthogonality]
\label{theorem:johnson_lemma}

Given $k > n$ vectors in $\mathbb{R}^n$, there exists a projection into a higher-dimensional space $\mathbb{R}^m$ where $m = O(\frac{\log k}{\epsilon^2})$, such that the projected vectors are "nearly orthogonal", effectively overcoming the limitation imposed by the Curse of Orthogonality.
\end{theorem}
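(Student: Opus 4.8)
The plan is to turn the Johnson--Lindenstrauss lemma around: rather than trying to fit $k$ \emph{exactly} orthogonal vectors into $\mathbb{R}^n$ (impossible once $k>n$ by Theorem~\ref{theorem:orthogonality_curse}), we exhibit $k$ \emph{nearly} orthogonal vectors living in a space of dimension only $m = O(\log k/\epsilon^2)$. The starting point is the one configuration for which orthogonality is free: the standard basis $\{\mathbf{e}_1,\dots,\mathbf{e}_k\}$ of $\mathbb{R}^k$, which is perfectly orthonormal. Applying the Johnson--Lindenstrauss lemma to the finite set $X = \{\mathbf{0}, \mathbf{e}_1,\dots,\mathbf{e}_k\}$ of $k+1$ points yields a map $f:\mathbb{R}^k \to \mathbb{R}^m$ with $m = O(\log(k+1)/\epsilon^2) = O(\log k/\epsilon^2)$ such that every pairwise squared distance is preserved up to a factor $(1\pm\epsilon)$; taking $f$ to be the usual linear random projection, we may also assume $f(\mathbf{0}) = \mathbf{0}$.

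Next I would extract inner-product information from the distance information. Write $v_i = f(\mathbf{e}_i)$. Because $\mathbf{0}\in X$, distance preservation on the pairs $(\mathbf{0},\mathbf{e}_i)$ gives $\|v_i\|^2 \in [1-\epsilon,\,1+\epsilon]$, and on the pairs $(\mathbf{e}_i,\mathbf{e}_j)$ it gives $\|v_i - v_j\|^2 \in [2(1-\epsilon),\,2(1+\epsilon)]$ since $\|\mathbf{e}_i-\mathbf{e}_j\|^2 = 2$. The polarization identity
\[
 v_i\cdot v_j \;=\; \tfrac12\bigl(\|v_i\|^2 + \|v_j\|^2 - \|v_i - v_j\|^2\bigr)
\]
then sandwiches the inner product: substituting the extreme values gives $|v_i\cdot v_j|\le 2\epsilon$ for all $i\neq j$. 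Combined with $\|v_i\|\ge\sqrt{1-\epsilon}$, this bounds the cosine of the angle between any two images by $|{\cos\theta_{ij}}| \le 2\epsilon/(1-\epsilon) = O(\epsilon)$, which is precisely the sense in which the $v_i$ are ``nearly orthogonal''; rescaling $\epsilon$ by a fixed constant (harmless inside the $O(\cdot)$ for $m$) makes the cosine bound at most $\epsilon$ if an explicit constant is wanted. Since the original $k>n$ vectors in $\mathbb{R}^n$ enter only through the cardinality $k$, this establishes the claimed existence statement.

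The only real subtlety — and the step I expect to need the most care — is the passage from the distance guarantee of the Johnson--Lindenstrauss lemma to a guarantee on inner products, because the lemma as stated controls only $\|f(x)-f(y)\|^2$. Including the origin in the point set (equivalently, invoking linearity of the standard Gaussian/sub-Gaussian projection so that $f(\mathbf{0})=\mathbf{0}$) is what makes the norms $\|v_i\|^2$ controllable, after which polarization does the rest; this hypothesis should be stated explicitly rather than treated as automatic. A secondary point worth a sentence is the interpretation of ``nearly orthogonal'': the natural scale-invariant formulation is the cosine/angle bound above, and it bears emphasizing that $m = O(\log k/\epsilon^2)$ is exponentially smaller than the dimension $k$ that exact orthogonality would force, which is the whole content of ``overcoming the curse of orthogonality.''
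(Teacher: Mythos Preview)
Your argument is correct and shares the paper's core mechanism: apply the Johnson--Lindenstrauss lemma, then use the polarization identity to turn distance preservation into an inner-product bound. Where you differ is in the setup, and yours is the cleaner one. The paper maps the \emph{given} vectors $v_1,\dots,v_k\in\mathbb{R}^n$ via JL into $\mathbb{R}^m$ and then, in its step~(4), writes ``If $v_i$ and $v_j$ were originally orthogonal, then $\|v_i-v_j\|^2=\|v_i\|^2+\|v_j\|^2$'' --- but the premise of the theorem is $k>n$, so the $v_i$ \emph{cannot} all be mutually orthogonal, and the computation is carried out under a hypothesis that is vacuous for the collection as a whole. You sidestep this by taking the source configuration to be the standard basis of $\mathbb{R}^k$, where orthonormality is free, and projecting \emph{down} to $\mathbb{R}^m$; that is the standard and correct way to extract a near-orthogonal system of size $k$ from JL. You also make explicit the inclusion of $\mathbf{0}$ in the point set so that the individual norms $\|v_i\|^2$ are controlled by the lemma itself --- a step the paper simply asserts in its step~(5) without deriving it from the distance-only JL statement it quotes.

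The price you pay is that the particular $k$ vectors handed to you in $\mathbb{R}^n$ enter only through their cardinality, which you acknowledge; the theorem's wording is loose enough that this existence reading is defensible, and indeed it is the only reading that can be made fully rigorous. What your route buys is a valid argument end to end; what the paper's route tries to buy is a statement about the \emph{given} vectors themselves, but it does not get there without the unjustified orthogonality assumption.
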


\begin{proof}
Let $\{v_1, v_2, \ldots, v_k\}$ be $k$ vectors in $\mathbb{R}^n$, where $k > n$.

1) By the Johnson-Lindenstrauss lemma, there exists a mapping $f: \mathbb{R}^n \rightarrow \mathbb{R}^m$, where $m = O(\frac{\log k}{\epsilon^2})$, such that for all $i, j \in \{1, \ldots, k\}$:

   \[(1 - \epsilon)\|v_i - v_j\|^2 \leq \|f(v_i) - f(v_j)\|^2 \leq (1 + \epsilon)\|v_i - v_j\|^2\]

2) Consider the dot product of two projected vectors $f(v_i)$ and $f(v_j)$ for $i \neq j$:

   \[f(v_i) \cdot f(v_j) = \frac{1}{2}(\|f(v_i)\|^2 + \|f(v_j)\|^2 - \|f(v_i) - f(v_j)\|^2)\]

3) Using the upper bound from the JL lemma:

   \[f(v_i) \cdot f(v_j) \leq \frac{1}{2}(\|f(v_i)\|^2 + \|f(v_j)\|^2 - (1 - \epsilon)\|v_i - v_j\|^2)\]

4) If $v_i$ and $v_j$ were originally orthogonal, then $\|v_i - v_j\|^2 = \|v_i\|^2 + \|v_j\|^2$. Substituting this:

   \[f(v_i) \cdot f(v_j) \leq \frac{1}{2}(\|f(v_i)\|^2 + \|f(v_j)\|^2 - (1 - \epsilon)(\|v_i\|^2 + \|v_j\|^2))\]

5) The JL lemma also ensures that for each vector:

   \[(1 - \epsilon)\|v_i\|^2 \leq \|f(v_i)\|^2 \leq (1 + \epsilon)\|v_i\|^2\]

6) Using the upper bound from (5):

   \[f(v_i) \cdot f(v_j) \leq \frac{1}{2}((1 + \epsilon)(\|v_i\|^2 + \|v_j\|^2) - (1 - \epsilon)(\|v_i\|^2 + \|v_j\|^2))\]

7) Simplifying:

   \[f(v_i) \cdot f(v_j) \leq \epsilon(\|v_i\|^2 + \|v_j\|^2)\]

8) This shows that the dot product of any two projected vectors is bounded by a small value proportional to $\epsilon$, which can be made arbitrarily small by increasing $m$.

Therefore, while we cannot have more than $n$ strictly orthogonal vectors in $\mathbb{R}^n$, we can project $k > n$ vectors into $\mathbb{R}^m$ where they are "nearly orthogonal". The dot product of any two projected vectors is bounded by $\epsilon(\|v_i\|^2 + \|v_j\|^2)$, which approaches zero as $\epsilon \rightarrow 0$.
\end{proof}

\begin{corollary}
The projection provided by the Johnson-Lindenstrauss lemma allows for the representation of more classes than the original embedding dimension while maintaining near-orthogonality, effectively addressing the Curse of Orthogonality.
\end{corollary}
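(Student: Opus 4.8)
The plan is to derive the corollary directly as an immediate consequence of Theorem~\ref{theorem:johnson_lemma}, since the theorem already establishes the quantitative near-orthogonality bound; the corollary is the qualitative restatement in the language of class representation. First I would fix $k > n$ classes, and posit that we wish to assign each class a representative vector in $\mathbb{R}^n$ (for instance, one may take any $k$ distinct vectors, or even an idealized over-complete set that is ``as orthogonal as possible''). Theorem~\ref{theorem:orthogonality_curse} tells us that in $\mathbb{R}^n$ at most $n$ of these can be mutually orthogonal, so strict orthogonality is impossible for all $k$ classes simultaneously. This is the obstruction the corollary claims to circumvent.

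Next I would invoke Theorem~\ref{theorem:johnson_lemma} with the chosen $\epsilon \in (0,1)$: there exists $m = O\!\left(\frac{\log k}{\epsilon^2}\right)$ and a map $f:\mathbb{R}^n \to \mathbb{R}^m$ such that for all $i \neq j$ the projected class representatives satisfy
\[
f(v_i)\cdot f(v_j) \leq \epsilon\bigl(\|v_i\|^2 + \|v_j\|^2\bigr).
\]
I would then argue that this bound is precisely the operative notion of ``near-orthogonality'': normalizing, or simply assuming the $v_i$ have bounded norm, the cosine of the angle between any two distinct projected representatives is $O(\epsilon)$, which can be driven to zero by taking $m$ large enough. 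Hence all $k$ classes receive representatives that are pairwise near-orthogonal in $\mathbb{R}^m$, even though $k$ may vastly exceed $n$. I would close by noting that $m$ grows only logarithmically in $k$, so the blow-up in dimension is mild, which is what makes the resolution of the Curse of Orthogonality practically meaningful rather than vacuous.

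The main obstacle — really the only subtle point — is conceptual rather than technical: Theorem~\ref{theorem:johnson_lemma} as stated bounds the dot product of projections of vectors that were \emph{already orthogonal} in $\mathbb{R}^n$, but by the Curse of Orthogonality we cannot have more than $n$ such vectors to begin with. So to make the corollary rigorous I would need to either (i) weaken the hypothesis of that theorem to handle ``approximately orthogonal'' source vectors and track how the error compounds through the JL inequality, or (ii) reinterpret the claim as: pick $k$ points in $\mathbb{R}^n$ with controlled pairwise distances and norms, project them, and observe the resulting near-orthogonality bound follows from the same polarization-identity computation in steps (2)--(7) of the theorem's proof without assuming exact orthogonality of the sources. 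Option (ii) is cleaner; I would adopt it, remarking that the constant in the final bound degrades gracefully with the source vectors' departure from orthogonality. Everything else is a direct citation of the preceding theorem and the standard JL lemma, so no new estimates are required.
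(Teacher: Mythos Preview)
The paper gives no proof of this corollary at all; it is stated bare, immediately after Theorem~\ref{theorem:johnson_lemma}, as a self-evident restatement. Your plan to derive it as a direct qualitative consequence of that theorem is therefore exactly the paper's (implicit) approach.

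Your identification of the conceptual circularity---that step (4) of the theorem's proof presupposes the source vectors $v_i, v_j$ are already orthogonal in $\mathbb{R}^n$, which is impossible for all pairs once $k>n$---is a genuine observation that the paper simply glosses over. The paper does not address it, so your option (ii) (rerunning the polarization-identity computation without the exact-orthogonality assumption and tracking the extra term) would in fact yield a more honest argument than what appears in the appendix. For the purpose of matching the paper, though, you need only cite Theorem~\ref{theorem:johnson_lemma} and state the conclusion; the rest is commentary the authors did not provide.
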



\end{document}